\def\1{\bm{1}}
\DeclareMathAlphabet{\mathsfit}{\encodingdefault}{\sfdefault}{m}{sl}
\SetMathAlphabet{\mathsfit}{bold}{\encodingdefault}{\sfdefault}{bx}{n}
\newcommand{\bc}{\begin{center}}
\newcommand{\ec}{\end{center}}
\newcommand{\be}{\begin{equation}}
\newcommand{\ee}{\end{equation}}
\newcommand{\ba}{\begin{array}}
\newcommand{\ea}{\end{array}}
\newcommand{\bea}{\setlength\arraycolsep{1pt}\begin{eqnarray}}
\newcommand{\eea}{\end{eqnarray}}
\newcommand{\ben}{\begin{enumerate}}
\newcommand{\een}{\end{enumerate}}
\newcommand{\bed}{\begin{itemize}}
\newcommand{\eed}{\end{itemize}}
\newtheorem{lemma}{Lemma}
\numberwithin{lemma}{subsection}
\newtheorem{theorem}{Theorem}[section]
\newtheorem{proposition}[theorem]{Proposition}
\theoremstyle{remark}
\def\cL{\mathcal L}
\def\cQ{\mathcal Q}
\newcommand{\bean}{\setlength\arraycolsep{1pt}\begin{eqnarray*}}
\newcommand{\eean}{\end{eqnarray*}}
\title{Mitigating Spurious Correlation via Distributionally Robust Learning with Hierarchical Ambiguity Sets}
\author{Sung Ho Jo \\
Pohang University of Science and Technology \\
\texttt{tjdgh1813@postech.ac.kr}
\And
Seonghwi Kim \\
Pohang University of Science and Technology \\
\texttt{kshwi@postech.ac.kr}
\And
Minwoo Chae\thanks{Corresponding author.} \\
Pohang University of Science and Technology \\
\texttt{mchae@postech.ac.kr} \\
}
\begin{document}

\maketitle
\begin{abstract}
	Conventional supervised learning methods are often vulnerable to spurious correlations, particularly under distribution shifts in test data. To address this issue, several approaches, most notably Group DRO, have been developed. While these methods are highly robust to subpopulation or group shifts, they remain vulnerable to intra-group distributional shifts, which frequently occur in minority groups with limited samples. We propose a hierarchical extension of Group DRO that addresses both inter-group and intra-group uncertainties, providing robustness to distribution shifts at multiple levels. We also introduce new benchmark settings that simulate realistic minority group distribution shifts—an important yet previously underexplored challenge in spurious correlation research. Our method demonstrates strong robustness under these conditions—where existing robust learning methods consistently fail—while also achieving superior performance on standard benchmarks. These results highlight the importance of broadening the ambiguity set to better capture both inter-group and intra-group distributional uncertainties.
\end{abstract}

\section{Introduction}
In recent years, machine learning methods have achieved remarkable success across a wide range of applications. An important objective of many machine learning methods is to learn model parameters that minimize the population risk, which is the population expectation of the loss function. Given training data and model parameters, the population risk can be approximated by the empirical risk, defined as the sample-averaged loss. Therefore, model parameters can be learned by minimizing the empirical risk, which is known as the empirical risk minimization (ERM) principle.

The underlying assumption of ERM-based methods is that the unseen future data, often referred to as test data, share the same distribution as the training data. However, in many real-world problems, the test data may follow a different distribution from the training data for various reasons. A notable example is subpopulation shift, where the training population consists of several groups (subpopulations), and the proportion of each group in the test data differs from that in the training data \citep{sagawa2019distributionally, cai2021theory, yang2023change}.

In many instances of subpopulation shifts, the group indicator is spuriously correlated with the target label or response variable. For example, in the widely studied Waterbirds dataset \citep{sagawa2019distributionally}, the target label (e.g., “Waterbird” or “Landbird”) is spuriously correlated with the background environment (e.g., water or land). As a result, ERM-based models tend to associate “Waterbird” primarily with water backgrounds, leading to significant performance degradation on minority groups, such as waterbirds appearing against land backgrounds. These vulnerabilities extend beyond controlled benchmarks, posing substantial risks in real-world, high-stakes domains such as healthcare \citep{zech2018variable, badgeley2019deep}, fairness \citep{hashimoto2018fairness, buolamwini2018gender, obermeyer2019dissecting}, and autonomous driving \citep{zhang2017curriculum}.


\begin{figure}
	\vspace{-8pt}
	\centering
	\subfigure[Group DRO Ambiguity Set]{
		\includegraphics[width=0.265\columnwidth]{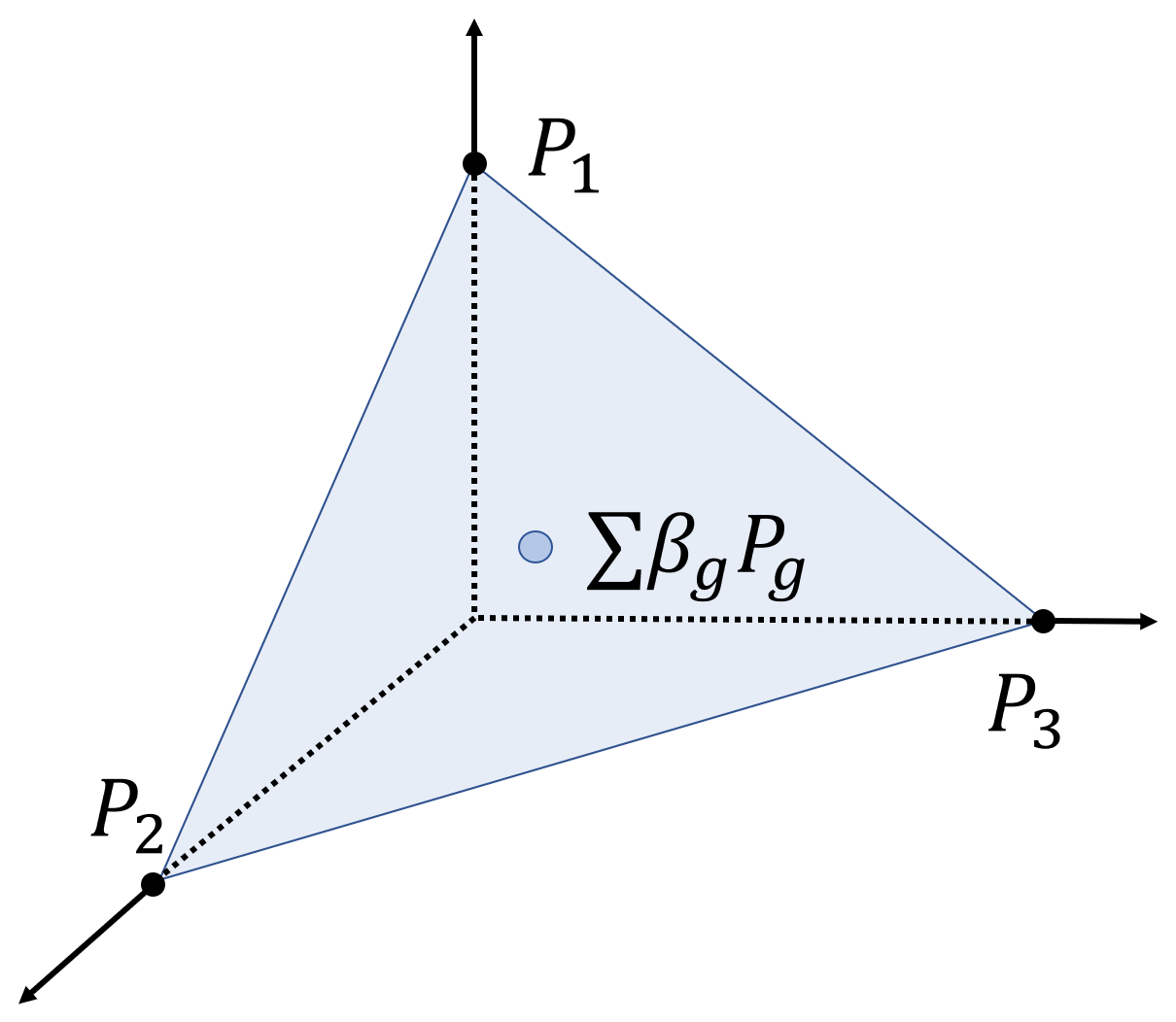}
	}
	\hspace{0.12\columnwidth}  
	\subfigure[Proposed Ambiguity Set]{
		\includegraphics[width=0.265\columnwidth]{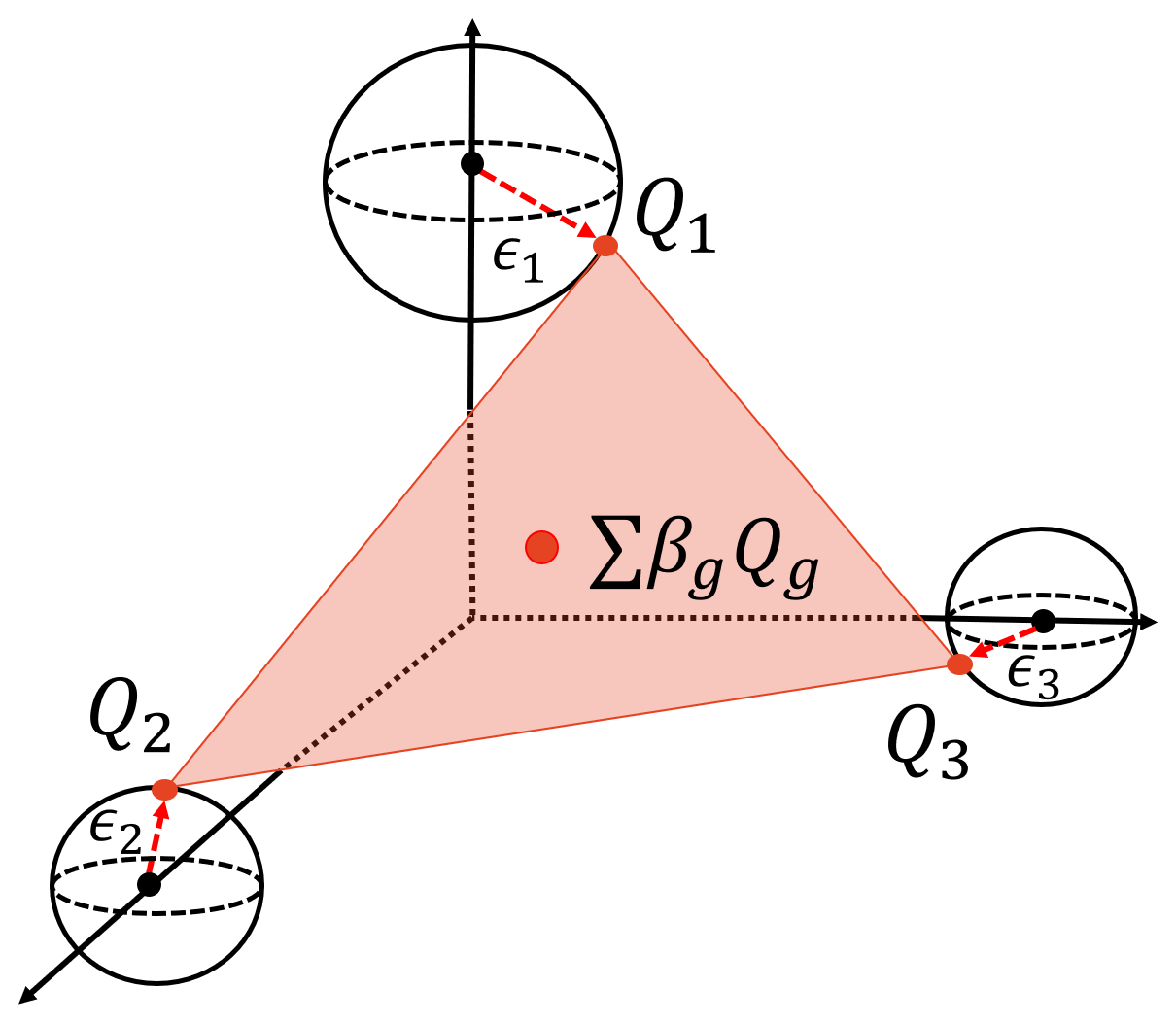}
	}
	\caption{Comparison of the Group DRO ambiguity set (a) and our hierarchical extension (b). While Group DRO restricts uncertainty to mixtures of group distributions, our approach introduces additional within-group uncertainty (indicated by red dashed arrows), offering robustness to both inter-group and intra-group distributional shifts. 
		(For visualization, we assume the 3-dimensional space in the figure represents a probability space, where each point corresponds to a probability distribution.)}
	\label{fig:ambiguity_sets}
	\vspace{-6pt}
\end{figure}

Over the past few years, a growing body of research has focused on mitigating these spurious correlations to ensure more reliable and stable model performance. Among the proposed methodologies, group distributionally robust optimization (Group DRO) \citep{sagawa2019distributionally} has emerged as a foundational approach. By partitioning the data into predefined groups and optimizing for the worst group loss, Group DRO effectively minimizes the model’s reliance on spurious correlations tied to specific subsets of data. This framework has inspired a wide range of subsequent methods, such as JTT \citep{liu2021just}, SSA \citep{nam2022spread}, PG-DRO \citep{ghosal2023distributionally}, DISC \citep{wu2023discover} and GIC \citep{hanimproving}, which commonly employ a two-step strategy: first identifying latent groups and then utilizing established robust training approaches—frequently Group DRO itself—to enhance model robustness.

While these methods have significantly advanced the field, they primarily focus on minimizing the worst-group loss under the assumption that each group’s training distribution reliably represents its true underlying distribution. However, this assumption often fails in practice—especially for minority groups with limited samples—where within-group distributional shifts naturally arise as a consequence of underrepresentation in the training data \citep{ben2002robust, devroye2013probabilistic, duchi2021learning}. This limitation highlights the need for a more flexible and robust approach that accounts for uncertainty not only across groups, but also within them.

In this work, we address these limitations by introducing a hierarchical ambiguity set within the Group DRO framework, capturing both inter-group and intra-group uncertainties. As illustrated in Figure~\ref{fig:ambiguity_sets}, while conventional Group DRO focuses on robustness only to shifts in group proportions by minimizing the worst-group risk, our approach extends this perspective by additionally modeling uncertainty within individual groups.

Technically, we employ a Wasserstein-distance-based formulation, which has recently garnered significant theoretical and empirical support for its efficacy in designing distributionally robust learning methods \citep{volpi2018generalizing, kuhn2019wasserstein, blanchet2022distributionally, bai2024wasserstein}. By defining a semantically meaningful cost function in a latent space, this formulation flexibly accommodates variations in the underlying data-generating mechanisms within each group. Consequently, our hierarchical ambiguity set enables the model to maintain robustness across a broader spectrum of distributional deviations, particularly for minority groups that are underrepresented in the training data.

Our main contributions are threefold:
\begin{itemize}[left=5pt] 
	\item We propose a novel hierarchical ambiguity set that simultaneously captures inter-group and intra-group uncertainties, constituting a fundamental advancement over previous methods built on Group DRO.
        \item We develop a tractable minimax optimization algorithm that is computationally efficient, enabling the practical realization of our DRO framework with a hierarchical ambiguity set.
	\item We introduce new evaluation settings on standard benchmarks that reveal a previously overlooked yet important failure mode: minority-group distribution shifts—even when simply altering the train–test split without synthetic noise or external shifts—where state-of-the-art methods collapse but our approach achieves consistently strong performance.
	
\end{itemize}

\section{Related work}

Using explicit group labels is a well-established approach to achieving robust performance on underrepresented subpopulations. \cite{sagawa2019distributionally} pioneered partitioning data by known group annotations and minimizing worst-group loss. Subsequent works extend this paradigm in multiple directions: Along similar lines, \cite{krueger2021out} expands Group DRO’s ambiguity set from convex to affine combinations of group distributions, although it still assumes fixed conditional distributions within each group. Meanwhile, \cite{kirichenkolast} rebalances distributions via subsampling, \cite{deng2024robust} employs a staged expansion of a group-balanced set, and \cite{yao2022improving} uses Mixup-based augmentations (e.g., CutMix, Manifold Mix) to learn more invariant features. \cite{izmailov2022feature} shows that even standard ERM can yield robust feature representations when combined with selective reweighting, and \cite{piratlafocus} further leverages inter-group interactions to identify shared features that enhance distributional robustness. 

In parallel, a growing body of work addresses scenarios where group annotations are unavailable or prohibitively expensive, motivating the need to infer or approximate group structure. Building on \cite{kirichenkolast}, \cite{labonte2024towards} extends last-layer retraining to settings with minimal or no group annotations. Some methods employ limited validation sets to discover latent groups and then apply Group DRO \citep{sagawa2019distributionally} for robust optimization \citep{nam2022spread, ghosal2023distributionally}.
Indeed, the model’s loss (or its alternatives) often helps identify underrepresented subpopulations; for example, \cite{nam2020learning, liu2021just, qiu2023simple} use high-loss samples to recognize minority groups. Other approaches draw on diverse cues: \cite{ahmed2020systematic, creager2021environment} infer groups by maximizing violations of the invariant risk minimization principle \citep{arjovsky2019invariant}, while \cite{asgari2022masktune} employs masking to reduce reliance on spurious features. \cite{sohoni2020no, seo2022unsupervised} instead cluster feature embeddings to discover latent groups and identify pseudo-attributes for debiasing. Likewise, \cite{zhang2022correct} proposes a two-stage contrastive learning framework by aligning samples with the same class but different spurious attributes, and \cite{wu2023discover} constructs a ``concept bank'' of candidate spurious attributes for robust partitioning. Another line of work identifies spurious features by comparing the training set with a carefully selected reference dataset \citep{hanimproving} or removes examples that disproportionately degrade worst-group accuracy \citep{jainimproving}.

Beyond these established techniques, emerging efforts explore new failure modes within the spurious correlation framework. For instance, recent works have examined imperfect group partitions \citep{zhou2021examining}, multiple spurious features \citep{paranjapeagro, kim2024improving}, low spurious signal strength \citep{mulchandanisevering}, and spurious-aware out-of-distribution detection \citep{zohrabi2025spurious}. Relatedly, recent work has shown that the presence of spurious correlations can itself constitute a failure mode for domain adaptation methods \citep{seonghwi2026}. In this context, our work addresses a critical yet underexplored issue: minority-group data, by nature extremely limited in size, may not faithfully reflect their underlying distributions—a natural but overlooked phenomenon that must be explicitly accounted for.

\section{Preliminaries}
\label{sec:problem}

\paragraph{Problem Setup.}

We consider a supervised learning problem where each observation consists of input features $X \in \mathcal{X}$ and a label $Y \in \mathcal{Y}$. Let $\{(x_i, y_i)\}_{i=1}^n$ be the training data and $\Theta$ be the parameter space. Assuming that the test data follows the same distribution as the training data, an important goal of various machine learning methods is to minimize the risk (also referred to as the test or generalization error)
\begin{equation} \label{eq:risk}
\mathbb{E}_P \bigl[\mathcal{L}(f^{\theta}(X), Y)\bigr]
\end{equation}
over $\Theta$, where $f^\theta$ is a function parametrized by $\theta$, $\mathcal{L}$ is a standard loss function, such as the cross-entropy, and $\mathbb{E}_P$ denotes the expectation under the population distribution $P$. To achieve this, one may solve the following optimization problem:
\begin{equation*}
\inf_{\theta \in \Theta} \left\{ \mathbb{E}_{\hat{P}}[\ell(\theta; (X, Y))] 
= \frac{1}{n} \sum_{i=1}^n \ell\bigl(\theta; (x_i, y_i)\bigr) \right\},
\end{equation*}
often referred to as the ERM problem, where $\hat P$ is the empirical measure and $\ell(\theta; (X,Y))$ is shorthand for $\mathcal{L}(f^\theta(X), Y)$.

In addition to the above basic setting, we assume that the data are partitioned into multiple groups, with an indicator variable $G \in \mathcal{G}$; thus, the training data can be expressed as $\{(x_i, y_i, g_i)\}_{i=1}^n$. In particular, we assume that this indicator variable is spuriously correlated with the label $Y$. More specifically, in all our examples, we assume the existence of a spurious attribute $A \in \mathcal{A}$, and that the group variable $G = (Y, A)$ is a pair involving the response variable $Y$. Hence, $\mathcal{G} = \mathcal{Y} \times \mathcal{A}$. With adjusted notation, we write $\mathcal{G} = \{1, \ldots, m\}$.

In the Waterbirds dataset, for example, the task is to classify birds as $\text{Waterbird}$ or $\text{Landbird}$, with a spurious attribute being the background type ($\text{Water Background}$ or $\text{Land Background}$), which results in four distinct groups. This dataset provides a classic example of spurious correlation: most waterbirds ($Y = \text{Waterbird}$) are found against water backgrounds ($A = \text{Water Background}$), leading models to rely excessively on background features. This overreliance substantially diminishes the performance of ERM on underrepresented groups, such as waterbirds on land backgrounds.

\paragraph{Group DRO.}
Group DRO \citep{sagawa2019distributionally} was devised to address the aforementioned issue caused by the spurious attribute. In the Group DRO, the training data are modeled as instances from a mixture distribution $P = \sum_{g=1}^m \alpha_g P_g$, where $P_g$ denotes the conditional distribution of $(X, Y)$ given $G = g$, and $\alpha = (\alpha_1, \dots, \alpha_m)$ represents the mixing proportion. Thus, each group forms a subpopulation within the training data. Instead of minimizing the population risk \eqref{eq:risk}, Group DRO aims to minimize 
\begin{equation} \label{eq:group-dro}
	\inf_{\theta \in \Theta} \max_{g \in \mathcal{G}} \mathbb{E}_{P_g}\bigl[\ell(\theta; (X, Y))\bigr], 
\end{equation}
which corresponds to the risk of the worst-performing group. Consequently, this procedure is highly robust to the subpopulation shift described in the introduction.

The Group DRO formulation \eqref{eq:group-dro} involves optimization over the discrete group variable $g$, posing a computational challenge for practical use. \cite{sagawa2019distributionally} showed that the problem can be reformulated into an equivalent form 
\begin{equation*} 
	\inf_{\theta \in \Theta} \sup_{\beta \in \Delta_{m-1}} \sum_{g=1}^m \beta_g  \mathbb{E}_{P_g}\bigl[\ell(\theta; (X, Y))\bigr],
\end{equation*}
that involves continuous variables only, where $\Delta_{m-1} = \{\beta: \beta_g \geq 0, \sum_{g=1}^m \beta_g = 1\}$ is the $(m-1)$-simplex.

Group DRO can be understood as an instance of standard DRO \citep{ben2013robust, duchi2021statistics} with a specific ambiguity set. Note that the standard DRO formulation is given as 
\begin{equation} \label{eq:general-dro}
	\inf_{\theta \in \Theta} \sup_{Q \in \mathcal{Q}} \mathbb{E}_{Q} \bigl[\ell(\theta; (X, Y))\bigr], 
\end{equation} 
where $\cQ$ is a class of distributions, commonly referred to as the \emph{ambiguity (or uncertainty) set}. The Group DRO formulation \eqref{eq:group-dro} is a specific case of DRO \eqref{eq:general-dro}, with
\begin{equation} \label{eq:group-dro-ambiguityset}
	\mathcal{Q} := \left\{ \sum_{g=1}^m \beta_g P_g \, : \, \beta \in \Delta_{m-1} \right\}.
\end{equation}
Note that in frequently used DRO frameworks, the ambiguity set $\cQ$ is often defined as a small neighborhood with respect to a standard (pseudo-)metric, such as the Wasserstein distance~\citep{mohajerin2018data,gao2024wasserstein} and $f$-divergence~\citep{namkoong2016stochastic,mehtadistributionally}.

\section{Proposed Method}
\label{sec:proposed}

\subsection{Hierarchical Ambiguity Sets} \label{sec:hierarchical_dro}

In this section, we propose a hierarchical extension of Group DRO to be robust to distribution shifts at multiple levels. The proposed method is devised to capture both inter-group and intra-group uncertainties in modeling the distributional shifts.
\vspace{-0.2cm}
\paragraph{High-level Formulation.}
As in Group DRO, we model the training distribution as a mixture of the form $P = \sum_{g=1}^m \alpha_g P_g$. To model the distributional uncertainty, we consider the DRO formulation \eqref{eq:general-dro} with a \emph{hierarchical ambiguity set} $\mathcal{Q}$, defined as    
\begin{equation} \label{eq:hierarchical_ambiguity_set}
	\mathcal{Q} = \left\{
	\sum_{g=1}^m \beta_g Q_{g} \;:\;
	\begin{aligned}
		& \beta \in \Delta_{m-1}, ~ d_1(\beta, \alpha) \leq \rho, \\
		&d_2(Q_{g}, P_{g}) \leq \epsilon_g \quad \forall g
	\end{aligned}
	\right\},
\end{equation}
where $d_1$ and $d_2$ are suitable metrics on $\Delta_{m-1}$ and the class of distributions for $(X, Y)$, respectively, and $\rho, \epsilon_g > 0$ are radii that determine the size of the ambiguity set.

The ambiguity set $\cQ$ has a two-level hierarchy. The first level is controlled by the mixing proportion $\beta$. It accounts for uncertainty in the proportion of each subpopulation or group. Such uncertainty can arise, for example, if certain minority groups appear more frequently in evaluation settings than in the training set, thereby increasing their probability of occurrence and potentially amplifying spurious correlations if not properly addressed \citep{sugiyama2006mixture}. At the second level, the distributional shift in each group is considered to capture within-group variability.

By jointly accounting for changes in the group proportion $\alpha$ and the conditional distributions $\{P_g\}^{m}_{g=1}$, the proposed framework provides two levels of robustness: inter-group generalization and resilience to intra-group variability. This dual modeling of real-world uncertainties enables the proposed method to address a broader range of distributional shifts compared to Group DRO \eqref{eq:group-dro} alone or standard DRO \eqref{eq:general-dro}, which uses a standard (pseudo-)metric neighborhood as its ambiguity set.

\paragraph{Relationship to Group DRO and Standard DRO.}

The ambiguity set \eqref{eq:group-dro-ambiguityset} used in Group DRO is a special case of the proposed ambiguity set \eqref{eq:hierarchical_ambiguity_set}. In particular, \eqref{eq:group-dro-ambiguityset} can be obtained by setting $\rho = \infty$ and $\epsilon_g = 0$.

While standard DRO, which uses a standard metric neighborhood as an ambiguity set, can also be understood as a special case of the proposed method, the philosophy of the proposed ambiguity set differs from that of the standard ones. In standard DRO, the ambiguity set is taken as a small neighborhood with respect to a standard (pseudo-)metric. In contrast, we allow a large value for $\rho$, the radius that determines robustness to group proportions. Hence, distributions that are far from $P$ with respect to standard metrics can also belong to the ambiguity set \eqref{eq:hierarchical_ambiguity_set}.

\paragraph{Detailed Formulation.}

The choice of $d_1$ is not critical because most reasonable metrics on $\Delta_{m-1}$ lead to similar behavior in our formulation.
The constraint $d_1(\beta, \alpha) \le \rho$ controls how much the test-time group proportions $\beta$ are allowed to deviate from the training mixture $\alpha$. When prior knowledge suggests that the test-time group proportions will not depart substantially from those observed in training (for example, they are expected to remain within a small neighborhood of $\alpha$), one may choose a finite $\rho$ to restrict attention to such mixtures. In the spurious-correlation setting we study, however, it is standard to assume no reliable prior on test-time group proportions and to evaluate robustness via worst-group accuracy, which can be viewed as examining performance under the extreme mixture that concentrates all probability mass on the worst-performing group. For this reason, in the remainder of this paper we instantiate our framework with $\rho = \infty$.

For $d_2$, we use a Wasserstein distance. Other options such as $f$-divergences (e.g., KL divergence or total variation distance) are in principle also possible, but they induce ambiguity sets with a structural limitation in our setting. Ambiguity sets defined via an $f$-divergence require candidate distributions to be absolutely continuous with respect to the empirical distribution; since the empirical distribution has finite support, such sets can only reweight the observed training samples. In contrast, Wasserstein distances do not impose absolute continuity and therefore allow support shifts, enabling the ambiguity set to include meaningful variations that may not appear in the training data. This choice is thus better aligned with the intra-group distributional shifts we aim to capture.

Among Wasserstein distances, we consider the infinite-order Wasserstein distance for computational convenience. Recall the definition of the Wasserstein distance of order $p \in [1, \infty)$:
\begin{equation*}
W_{p}(Q, P) 
= \inf_{\gamma}
\left\{
\left( 
\int c\bigl((x, y), (x', y')\bigr)^p \, d\gamma 
\right)^{\frac{1}{p}}
\right\},
\end{equation*}
where the infimum is taken over every coupling $\gamma$ of $Q$ and $P$, and $c(\cdot,\cdot)$ is a cost function. The infinite-order Wasserstein distance is defined as $W_\infty(Q, P) = \sup_{p \geq 1} W_p(Q, P)$, with a variational representation    
\begin{equation} \label{eq:W-representation}
W_\infty(Q, P) = \inf \left\{
\epsilon > 0 \;:\;
\begin{aligned}
	& Q(A) \leq P(A^\epsilon) \\ 
	& \text{for every Borel set $A$}
\end{aligned}
\right\},
\end{equation}
where $A^\epsilon$ denotes the $\epsilon$-enlargement of $A$; see \cite{givens1984class}.

The cost function is defined in a latent semantic space, which is more effective than defining it in the space of raw data \citep{zeiler2014visualizing, krizhevsky2017imagenet}. Specifically, we employ a deep neural network $f^\theta$ of depth $L$, defined as
\begin{equation*}
f^\theta(x) = f_L^\theta \left( f_{L-1}^\theta \left( \ldots f_1^\theta(x) \right) \right),
\end{equation*}
and take the output of the $(L-1)$-th layer (before the final fully connected layer) as the semantic representation:
\begin{equation} \label{eq:feature}
z(x) := f_{L-1}^\theta \left( f_{L-2}^\theta \left( \ldots f_1^\theta(x) \right) \right).
\end{equation}
We then define the cost function $c(\cdot, \cdot)$ as
\begin{equation*}
c\bigl((x,y),(x',y')\bigr)
\;=\; 
\begin{cases}
	\|z(x) - z(x')\|, & \text{if } y = y',\\
	\infty, & \text{otherwise}.
\end{cases}
\end{equation*}
Note that under our definition, $W_p(P, Q) = \infty$ if the marginals $P$ and $Q$ of $Y$ differ. In all our applications, the group indicator $G$ is defined as a pair $(Y, A)$; hence, this definition does not cause any issues.

\paragraph{Proposed Hierarchical DRO Formulation.}
To sum up, the proposed hierarchical DRO can be written in the standard form \eqref{eq:general-dro} with the ambiguity set
\begin{equation} \label{eq:ambiguity_set}
	\mathcal{Q} = \left\{
	\sum_{g=1}^m \beta_g Q_{g} \;:\;
	\begin{aligned}
		&\beta \in \Delta_{m-1}, \\ 
		&W_\infty(Q_{g}, P_{g}) \leq \epsilon_g \quad \forall g
	\end{aligned}
	\right\}.
\end{equation}
The flexibility in $\beta \in \Delta_{m-1}$ allows the group proportion to differ from $\alpha$, which enables adaptation to new or changing subpopulation frequencies without introducing entirely new groups. With the constraint $W_\infty(Q_g, P_g) \leq \epsilon_g$, we accommodate plausible instance-level shifts within each group. Leveraging the semantic cost $c(\cdot,\cdot)$ allows the model to capture meaningful perturbations in high-dimensional feature spaces without conflating different class labels.

\subsection{Algorithm}

In this subsection, we provide an algorithm to solve the proposed DRO with the ambiguity set \eqref{eq:ambiguity_set}. We set $\epsilon_g = \epsilon / \sqrt{n_g}$, where $n_g$ is the size of group $g$ in the training data, and $\epsilon$ is a tunable hyperparameter that controls the degree of robustness to within-group distributional shifts. Intuitively, the fewer samples a group has, the more cautiously its potential distributional variations must be accounted for. The choice of $\epsilon$ is a common challenge across robust learning methods, particularly DRO-based approaches. While our framework remains effective across a broad range of values of $\epsilon$, we propose a simple yet effective heuristic for its selection to further enhance the practicality of our framework. Further details on the selection of $\epsilon$ are provided in Appendix~\ref{sec:tuning_epsilon}.

Due to the hierarchical structure of the ambiguity set in \eqref{eq:ambiguity_set}, solving the resulting DRO problem is not straightforward. We therefore begin by reformulating the hierarchical DRO into a tractable optimization problem. We formally state the resulting formulation in the following theorem. The proof is provided in Appendix~\ref{sec:appendix-proof_equiv}.

\begin{theorem}
	\label{thm:optimization_reformulation}
	Let $\mathcal{Q}$ be the ambiguity set defined in \eqref{eq:ambiguity_set}. Then, the corresponding distributionally robust optimization problem
	\[
	\inf_{\theta \in \Theta} \sup_{Q \in \mathcal{Q}} \mathbb{E}_{Q}[\ell(\theta; (X, Y))]
	\]
	is upper-bounded by the following surrogate objective:
	\begin{equation} \label{eq:optimization_problem_final_form}
		\inf_{\theta \in \Theta} \sup_{\beta \in \Delta_{m-1}} \sum_{g=1}^m \beta_g \mathbb{E}_{P_{g}} \left[ \sup_{z': \|z' - z(X) \| \leq \epsilon_g} \cL\big(f^\theta_L(z'), Y\big) \right].
	\end{equation}
\end{theorem}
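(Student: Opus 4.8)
The plan is to fix $\theta \in \Theta$, analyze the inner supremum $\sup_{Q \in \mathcal{Q}} \mathbb{E}_Q[\ell(\theta; (X,Y))]$ for that fixed $\theta$, and take $\inf_{\theta \in \Theta}$ only at the very end. The first step exploits the product structure of the ambiguity set \eqref{eq:ambiguity_set}. Every $Q \in \mathcal{Q}$ has the form $Q = \sum_{g=1}^m \beta_g Q_g$ with $\beta \in \Delta_{m-1}$, and the constraint on $\beta$ is independent of the constraints $W_\infty(Q_g, P_g) \leq \epsilon_g$ on the individual $Q_g$. By linearity of expectation $\mathbb{E}_Q[\ell] = \sum_g \beta_g \mathbb{E}_{Q_g}[\ell]$, and since $\beta_g \geq 0$ the maximization over the tuple $(Q_1,\dots,Q_m)$ separates, so that
\begin{equation*}
\sup_{Q \in \mathcal{Q}} \mathbb{E}_Q[\ell(\theta;(X,Y))] = \sup_{\beta \in \Delta_{m-1}} \sum_{g=1}^m \beta_g \sup_{Q_g:\, W_\infty(Q_g, P_g) \leq \epsilon_g} \mathbb{E}_{Q_g}[\ell(\theta;(X,Y))].
\end{equation*}

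The central step is to control each inner Wasserstein-DRO term. I would invoke the variational representation \eqref{eq:W-representation}: the constraint $W_\infty(Q_g, P_g) \leq \epsilon_g$ gives $Q_g(A) \leq P_g(A^{\epsilon_g})$ for every Borel set $A$, which by a Strassen-type theorem is equivalent to the existence of a coupling $\gamma$ of $P_g$ and $Q_g$ under which the cost $c$ is at most $\epsilon_g$ almost surely. Writing $(X,Y) \sim P_g$ and $(X',Y') \sim Q_g$ for the two marginals of $\gamma$, finiteness of $c$ forces $Y = Y'$ and $\|z(X) - z(X')\| \leq \epsilon_g$ almost surely. Bounding the integrand pointwise by its worst case over the admissible ball then yields
\begin{equation*}
\mathbb{E}_{Q_g}[\ell(\theta;(X',Y'))] = \mathbb{E}_\gamma\big[\mathcal{L}(f^\theta_L(z(X')), Y')\big] \leq \mathbb{E}_{P_g}\Big[\sup_{x':\, \|z(x') - z(X)\| \leq \epsilon_g} \mathcal{L}(f^\theta_L(z(x')), Y)\Big],
\end{equation*}
where the inner supremum depends only on $(X,Y) \sim P_g$, so the coupling disappears from the right-hand side. (We need only the $\leq$ direction here, so the existence of a worst-case $Q_g$ is not required.)

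The final step is the relaxation that turns this into a surrogate upper bound rather than an equality. The supremum above ranges only over latent vectors $z(x')$ that are actually realizable by some input $x'$, whereas enlarging it to a free supremum over all $z'$ in the Euclidean ball $\{z' : \|z' - z(X)\| \leq \epsilon_g\}$ can only increase the value:
\begin{equation*}
\sup_{x':\, \|z(x') - z(X)\| \leq \epsilon_g} \mathcal{L}(f^\theta_L(z(x')), Y) \;\leq\; \sup_{z':\, \|z' - z(X)\| \leq \epsilon_g} \mathcal{L}(f^\theta_L(z'), Y).
\end{equation*}
Chaining the three steps and taking $\inf_{\theta \in \Theta}$ on both sides produces exactly \eqref{eq:optimization_problem_final_form}.

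The main obstacle I anticipate is the second step: rigorously passing from the $W_\infty$ constraint to an almost-surely cost-bounded coupling. This rests on the Strassen representation behind \eqref{eq:W-representation}, together with lower semicontinuity of the cost $c$ (which holds here because $z$ is continuous and $\mathcal{Y}$ is discrete) to guarantee that the infimum defining $W_\infty$ is attained by an optimal coupling; if one prefers not to invoke attainment, the same conclusion follows by working at radius $\epsilon_g + \delta$ and letting $\delta \downarrow 0$. A secondary technical point is a measurable-selection argument ensuring that $(x,y) \mapsto \sup_{z':\,\|z'-z(x)\|\leq \epsilon_g} \mathcal{L}(f^\theta_L(z'), y)$ is measurable, so that the expectations on the right-hand side are well-defined.
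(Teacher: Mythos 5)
Your proposal is correct and follows essentially the same route as the paper's proof: separate the supremum over $\beta$ from the per-group suprema, reduce each $W_\infty$-ball supremum to a pointwise worst-case expectation under $P_g$, and then relax the supremum over realizable latent vectors $z(x')$ to a free supremum over the Euclidean ball. The only difference is that the paper cites Proposition~3.1 of \cite{staib2017distributionally} for the middle step, whereas you reprove the one-sided inequality directly via the Strassen-type coupling characterization of $W_\infty$ — a sound and, if anything, more self-contained treatment.
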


Intuitively, Theorem~\ref{thm:optimization_reformulation} shows that the worst-case risk over our hierarchical ambiguity set can be conservatively over-approximated by an adversarial perturbation problem in the latent space, where the inner maximization is weighted by the worst-case group proportions~$\beta$.

\paragraph{Remark (Tightness of the relaxation).}
{
The upper bound in \eqref{eq:optimization_problem_final_form} relies on relaxing the inner maximization from
\begin{align*}
    \mathbb{E}_{P_{g}} \left[ 
	\sup_{x: \|z(x) - z(X) \| \leq \epsilon_g} 
	\mathcal{L}\big( f^\theta_L(z(x)), Y \big) 
	\right]
	&\leq \mathbb{E}_{P_{g}} \left[ 
	\sup_{z': \|z' - z(X) \| \leq \epsilon_g} 
	\mathcal{L}\big( f^\theta_L(z'), Y \big) 
	\right]
\end{align*}
}

{
This relaxation can be strict in general when the feature map $z(\cdot)$ is not surjective onto the $\epsilon_g$-ball. However, we expect this effect to be negligible in practice for embedding distributions arising in modern deep networks. It is commonly assumed that the pushforward distribution of $X$ under $z(\cdot)$ possesses a positive Lebesgue density on, or in a neighborhood of, its effective support. 

Although raw images may concentrate near a low-dimensional manifold in input space and thus typically do not admit a Lebesgue density, the situation is different after they pass through several nonlinear layers. Empirically and conceptually, internal representations produced by deep networks behave as ``thickened'' manifolds and are typically modeled as distributions with nonzero volume in the ambient feature space.

Formally proving the existence of such a density for arbitrary deep embeddings is challenging. Nevertheless, under this standard modeling assumption, the relaxation above becomes nearly tight. The overlap between the $\epsilon_g$-ball and the image of $z(\cdot)$ is substantial, making the resulting upper bound in \eqref{eq:optimization_problem_final_form} a close approximation of the original hierarchical objective. 

We therefore minimize the surrogate objective~\eqref{eq:optimization_problem_final_form} via a coordinate-wise procedure, as detailed next.}

\paragraph{Proposed Iterative Training Procedure.}
For a given $\theta$, let $z'_i$ denote the maximizer of the map 
\bean
z' \mapsto \cL\big(f^\theta_L(z'), y_i\big)
\eean
over the set $\{z': \|z' - z(x_i) \| \leq \epsilon_{g_i}\}$.
To solve the optimization problem \eqref{eq:optimization_problem_final_form}, we iteratively update $\beta$, $\theta$ and semantic variables $z_i'$ coordinate-wise as below. A pseudo-code for a minibatch size of 1 is provided in Algorithm \ref{alg:hierarchical_dro}. \vspace{-0.5em}    
\begin{enumerate}[leftmargin=1.5em]
	\setlength{\itemsep}{1pt}
	\item \emph{Update of $z'$.} For given $\theta$, $z'_i$ can be approximated by one-step projected gradient ascent, ensuring that $\|z' - z(x_i)\| \le \epsilon_{g_i}$. (Lines 6--8)
	\item \emph{Update of $\beta$.} For given $\theta$ and $z'_i$, $\beta$ can be computed using exponentiated gradient ascent, a variant of mirror descent with negative Shannon entropy~\citep{sagawa2019distributionally}. (Lines 10--12)
	\item \emph{Update of $\theta$.} For a given $\beta$ and $z'_i$, we update $\theta$ using stochastic gradient descent. (Line 13)
\end{enumerate}
A convergence guarantee under convexity assumptions is established in Appendix~\ref{appendix:convergence-proof}, showing that the algorithm achieves an $O(1/\!\sqrt{T})$ convergence rate.

\begin{algorithm}[tb]
	\footnotesize
	\caption{DRO with a Hierarchical Ambiguity Set}
	\label{alg:hierarchical_dro}
	\begin{algorithmic}[1]
		\State {\bfseries Input:} Step sizes \( \eta_\beta \), \( \eta_{\theta} \), \( \eta_{z} \); initial parameters \( \theta^{(0)} \), \( \beta^{(0)} \); number of iterations \( T \)
		\For{$t = 1$ {\bfseries to} $T$}
		\State Sample \( g \sim \text{Uniform}(1,\ldots,m) \)
		\State Sample \( (x, y) \sim P_g \)
		\State Initialize \( z' \leftarrow z(x) \)			
		\State \( z' \leftarrow z' + \eta_{z} \nabla_{z'} \cL \big( f^{\theta^{(t-1)}}_L(z'), y \big) \)
		\If{$\| z' - z(x) \| > \epsilon_g$}
		\State \( z' \leftarrow \text{Proj}_{\| z' - z(x) \| \leq \epsilon_g}(z') \)
		\EndIf			
		\State Update $\beta' \leftarrow \beta^{(t-1)}$
		\State Update $\beta'_g \leftarrow \beta^{'}_g \exp\left(\eta_\beta \cL \big( f^{\theta^{(t-1)}}_L(z'), y \big)\right)$
		\State Normalize $\beta^{(t)} \leftarrow \beta' / \sum \beta'_{g'}$
		\State Update \( \theta^{(t)} \leftarrow \theta^{(t-1)} - \eta_{\theta} \beta_g^{(t)} \nabla_{\theta} \cL \big( f^{\theta^{(t-1)}}_L(z'), y \big) \)
		\EndFor
	\end{algorithmic}
\end{algorithm}





	\section{Experiments} 
	
	\subsection{Dataset}
	\label{sec:dataset}
	We conduct experiments on three widely used benchmark datasets, CMNIST, Waterbirds, and CelebA, each exhibiting known spurious correlations between the label and an irrelevant attribute. All datasets include a minority group that is underrepresented, rendering them susceptible to distributional shifts.
	
	\paragraph{Original Datasets.}
	\begin{itemize}[leftmargin=*]
		\setlength{\itemsep}{0.5pt}	
		\item \textbf{CMNIST} \citep{arjovsky2019invariant}: A colored variant of MNIST, split into four groups based on digit label (\emph{digits 0–4 as label 0}, and \emph{digits 5–9 as label 1}) and color (\emph{red} vs.\ \emph{green}). The color is spuriously correlated with the digit label in the training set.
		\item \textbf{Waterbirds} \citep{sagawa2019distributionally}: Created by combining bird images from CUB \citep{wah2011caltech} with backgrounds from Places \citep{zhou2017places}, yielding four groups based on \emph{(bird type, background)}. The minority group \emph{(waterbird, land background)} typically has few samples.
		\item \textbf{CelebA} \citep{liu2015deep}: A facial attribute dataset used here for classifying \emph{blond} vs.\ \emph{non-blond} hair, where \emph{gender} acts as a spurious attribute. The minority group \emph{(blond hair, male)} is significantly underrepresented.
	\end{itemize}
	
	\paragraph{Modified Datasets with Minority Group Shifts.}
	To rigorously test our approach under more realistic distribution shifts, we construct modified versions of the above datasets by inducing intra-group shifts specifically in each minority group: 
	\setlength{\itemsep}{0.5pt}
	\begin{itemize}[leftmargin=*]
		\item \textbf{Shifted CMNIST}: Rotate all images in the minority group \emph{(label~1, red)} by \(90^\circ\) at test time, while keeping them unrotated at training time.
		\item \textbf{Shifted Waterbirds}: Restrict the training set’s minority group \emph{(waterbird, land background)} to only \emph{waterfowls}, and the test set’s minority group to only \emph{seabirds}.
		\item \textbf{Shifted CelebA}: For minority group \emph{(blond hair, male)}, include only \emph{no-glasses} images in training and only \emph{with-glasses} images at test time.
	\end{itemize} 
	These modifications reflect real-world scenarios where underrepresented groups not only appear more frequently but also exhibit subtle changes. Further details and illustrative examples are provided in Appendix~\ref{sec:appendix-dataset}.

\label{sec:experiment}
	\FloatBarrier
	\begin{table*}[t]
		\caption{\small
			Worst-group and average accuracy on CMNIST, Waterbirds, and CelebA under shifted distributions. 
			All results are averaged over three runs with different random seeds. 
			Boldface indicates the best performance, while underlined numbers denote the second-best.}
		
		\label{tab:performance_1}
		\begin{center}
			\begin{footnotesize}
				\setlength{\tabcolsep}{4pt} 
				\renewcommand{\arraystretch}{1.0} 
				\begin{tabular}{@{\hspace{5pt}}p{1.2cm}cccccccc@{}}
					\toprule
					\multicolumn{1}{l}{\multirow{2}{*}{Method}} &
					\multicolumn{1}{c}{\multirow{2}{*}{Group label}} &
					\multicolumn{2}{c}{Shifted CMNIST} &
					\multicolumn{2}{c}{Shifted Waterbirds} &
					\multicolumn{2}{c}{Shifted CelebA} \\
					\cmidrule(r){3-4} \cmidrule(l){5-6} \cmidrule(l){7-8}
					& & Worst Acc & Average Acc & Worst Acc & Average Acc & Worst Acc & Average Acc \\
					\midrule
					\raggedright GroupDRO & $\checkmark$ & \underline{65.9}{\tiny$\pm$8.2} & 74.0{\tiny$\pm$0.7} & \underline{91.7}{\tiny$\pm$0.3} & 94.9{\tiny$\pm$0.1} & 59.8{\tiny$\pm$3.2} & 92.4{\tiny$\pm$0.3} \\
					\raggedright LISA & $\checkmark$ & 42.9{\tiny$\pm$10.0} & 59.8{\tiny$\pm$4.5} & 79.1{\tiny$\pm$1.8} & 94.2{\tiny$\pm$0.3} & \underline{60.6}{\tiny$\pm$1.1} & 92.1{\tiny$\pm$0.2} \\
					\raggedright DFR$^{\text{tr}}$ & $\checkmark$ & 28.0{\tiny$\pm$4.9} & 47.8{\tiny$\pm$1.9} & 89.2{\tiny$\pm$1.5} & 96.3{\tiny$\pm$0.4} & 50.3{\tiny$\pm$3.5} & 90.5{\tiny$\pm$0.4} \\
					\raggedright PDE & $\checkmark$ & 65.3{\tiny$\pm$11.1} & 71.3{\tiny$\pm$6.2} & 84.4{\tiny$\pm$4.6} & 92.0{\tiny$\pm$0.6} & 56.3{\tiny$\pm$11.2} & 91.6{\tiny$\pm$0.4} \\
					\cmidrule(lr){1-8}
					\raggedright Ours & $\checkmark$ & \textbf{71.8}{\tiny$\pm$2.8} & 75.0{\tiny$\pm$0.4} & \textbf{93.7}{\tiny$\pm$0.2} & 94.6{\tiny$\pm$0.1} & \textbf{72.1}{\tiny$\pm$2.0} & 91.3{\tiny$\pm$0.1} \\
					\bottomrule
				\end{tabular}
			\end{footnotesize}
		\end{center}
	\end{table*}
	
	\begin{table*}[t]
		\caption{\small Worst-group and average accuracy on CMNIST, Waterbirds, and CelebA under their original (unshifted) distributions.}
		
		\label{tab:performance_2}
		\begin{center}
			\begin{footnotesize}
				\setlength{\tabcolsep}{4pt} 
				\renewcommand{\arraystretch}{1.0} 
				\begin{tabular}{@{\hspace{5pt}}p{1.2cm}cccccccc@{}}
					\toprule
					\multicolumn{1}{l}{\multirow{2}{*}{Method}} &
					\multicolumn{1}{c}{\multirow{2}{*}{Group label}} &
					\multicolumn{2}{c}{CMNIST} &
					\multicolumn{2}{c}{Waterbirds} &
					\multicolumn{2}{c}{CelebA} \\
					\cmidrule(r){3-4} \cmidrule(l){5-6} \cmidrule(l){7-8}
					& & Worst Acc & Average Acc & Worst Acc & Average Acc & Worst Acc & Average Acc \\                
					\midrule
					\raggedright ERM & $\times$ & 3.4{\tiny$\pm$0.9} & 12.9{\tiny$\pm$0.8} & 62.6{\tiny$\pm$0.3} & 97.3{\tiny$\pm$1.0} & 47.7{\tiny$\pm$2.1} & 94.9{\tiny$\pm$0.3} \\
					\raggedright JTT & $\times$ & 67.3{\tiny$\pm$5.1} & 76.4{\tiny$\pm$3.3} & 83.8{\tiny$\pm$1.2} & 89.3{\tiny$\pm$0.7} & 81.5{\tiny$\pm$1.7} & 88.1{\tiny$\pm$0.3} \\
					\raggedright CnC & $\times$ & -- & -- & 88.5{\tiny$\pm$0.3} & 90.9{\tiny$\pm$0.1} & 88.8{\tiny$\pm$0.9} & 89.9{\tiny$\pm$0.5} \\
					\raggedright GIC & $\times$ & 72.2{\tiny$\pm$0.5} & 73.2{\tiny$\pm$0.2} & 86.3{\tiny$\pm$0.1} & 89.6{\tiny$\pm$1.3} & 89.4{\tiny$\pm$0.2} & 91.9{\tiny$\pm$0.1} \\
					\raggedright SSA & $\times$ & 71.1{\tiny$\pm$0.4} & 75.0{\tiny$\pm$0.3} & 89.0{\tiny$\pm$0.6} & 92.2{\tiny$\pm$0.9} & 89.8{\tiny$\pm$1.3} & 92.8{\tiny$\pm$0.1} \\
					\raggedright GroupDRO & $\checkmark$ & 73.1{\tiny$\pm$0.3} & 74.8{\tiny$\pm$0.2} & \underline{90.6}{\tiny$\pm$0.2} & 92.7{\tiny$\pm$0.1} & 89.3{\tiny$\pm$1.3} & 92.6{\tiny$\pm$0.3} \\
					\raggedright LISA & $\checkmark$ & \underline{73.3}{\tiny$\pm$0.2} & 74.0{\tiny$\pm$0.1} & 89.2{\tiny$\pm$0.6} & 91.8{\tiny$\pm$0.3} & 89.3{\tiny$\pm$1.1} & 92.4{\tiny$\pm$0.4} \\
					\raggedright DFR$^{\text{tr}}$ & $\checkmark$ & 59.8{\tiny$\pm$0.4} & 62.1{\tiny$\pm$0.2} & 90.2{\tiny$\pm$0.8} & 97.0{\tiny$\pm$0.3} & 80.7{\tiny$\pm$2.4} & 90.6{\tiny$\pm$0.7} \\
					\raggedright PDE & $\checkmark$ & 72.6{\tiny$\pm$0.7} & 73.0{\tiny$\pm$0.4} & 90.3{\tiny$\pm$0.3} & 92.4{\tiny$\pm$0.8} & \textbf{91.0}{\tiny$\pm$0.4} & 92.0{\tiny$\pm$0.6} \\
					\cmidrule(lr){1-8}
					\raggedright Ours & $\checkmark$ & \textbf{73.6}{\tiny$\pm$0.3} & 75.1{\tiny$\pm$0.5} & \textbf{90.8}{\tiny$\pm$0.2} & 92.6{\tiny$\pm$0.2} & \underline{90.4}{\tiny$\pm$0.3} & 92.7{\tiny$\pm$0.0} \\
					\bottomrule
				\end{tabular}
			\end{footnotesize}
		\end{center}
		\vskip -0.1in
	\end{table*}

	\subsection{Baselines}
	\label{sec:baselines}
	
	We compare our method to several representative baselines: 
	ERM, 
	Group DRO \citep{sagawa2019distributionally}, 
	JTT \citep{liu2021just}, 
	CnC \citep{zhang2022correct}, 
	SSA \citep{nam2022spread}, 
	LISA \citep{yao2022improving}, 
	DFR \citep{kirichenkolast}, 
	PDE \citep{deng2024robust}, 
	and GIC \citep{hanimproving}.
	These methods range from direct robust learning (e.g., Group DRO) to two-step pipelines that first infer group membership and then apply robust training (e.g., SSA, GIC). Detailed descriptions are provided in Appendix~\ref{sec:appendix-baseline}.
	
	For our newly constructed datasets incorporating minority-group distribution shifts, we conducted experiments focusing on 
	Group DRO, 
	LISA, 
	DFR, 
	and PDE.
	Unlike methods that infer group labels and then rely on a separate robust training step, these four baselines—like our proposed approach—directly utilize known group information. This distinction provides a more consistent and fair comparison in scenarios where explicit group labels are available.

	\subsection{Evaluation}
	\label{sec:evaluation}
	
	\paragraph{Metrics.}
	We consider two metrics: \emph{worst-group accuracy} and \emph{average accuracy}. The worst-group accuracy is obtained by evaluating accuracy on each group and taking the minimum across all groups, providing insight into how a method performs if the test distribution is heavily skewed toward the most challenging subgroup. Meanwhile, the average accuracy is computed as the weighted average of group accuracy, where the weights are proportional to the group sizes in the training data, reflecting overall performance but offering less visibility into group-specific disparities.
	\vspace{-0.3cm} 
	\paragraph{Model Selection.}
	Following \cite{sagawa2019distributionally} and related methods, we select hyperparameters 
	and stopping criteria based on the highest worst-group validation accuracy.
	In particular, for scenarios involving minority group shifts, we adopt the 
	data-driven tuning procedure from Appendix~\ref{sec:tuning_epsilon} to determine the 
	perturbation parameter \(\epsilon\).
	
	
	\subsection{Results}
	\label{sec:results}
	
	\paragraph{Performance on Shifted Distributions.}

	Under shifted distributions (Table~\ref{tab:performance_1}), our method demonstrates clear superiority in worst-group accuracy across all three benchmarks (CMNIST, Waterbirds, and CelebA). On CMNIST, LISA and DFR degrade substantially, highlighting their vulnerability to intra-group shifts. By contrast, our framework maintains high worst-group accuracy.

	For Waterbirds, which involves a moderate shift in species composition within the minority group, most baselines experience notable drops in worst-group accuracy. In contrast, our approach maintains robust worst-group accuracy, indicating its capacity to adapt to intra-group variability. Interestingly, both Group DRO and our method report higher worst-group accuracy in the shifted case than in the original Waterbirds dataset (Table~\ref{tab:performance_2}); however, this discrepancy arises from a known mislabeling issue~\citep{asgari2022masktune}, where three bird species labeled as “waterbird” should actually be “landbird.” To verify this, we correct the mislabeled samples in the original dataset and report results in Table~\ref{tab:performance_3}: under the corrected labels, both Group DRO and our method exhibit the expected pattern, performing better in the unshifted setting than under the minority-group shift. Notably, our approach outperforms Group DRO in both scenarios, confirming its robustness even after label corrections.

    \begin{wraptable}{r}{0.55\textwidth}
		\caption{\footnotesize Worst-group and average accuracy on Waterbirds under minority group shifted distributions and Corrected Waterbirds on the original dataset with corrected labels.}
		\label{tab:performance_3}
		\centering
		\vspace{0.1cm}
		\resizebox{\linewidth}{!}{  
			\begin{tabular}{@{\hspace{2pt}}p{2.6cm}cccc@{}}
				\toprule
				\multicolumn{1}{l}{\multirow{2}{*}{Method}} &
				\multicolumn{2}{c}{Shifted Waterbirds} &
				\multicolumn{2}{c}{Corrected Waterbirds} \\
				\cmidrule(r){2-3} \cmidrule(l){4-5}
				& Worst Acc & Avg Acc & Worst Acc & Avg Acc \\
				\midrule
				\raggedright Group DRO & 91.7{\small$\pm$0.3} & 94.9{\small$\pm$0.1} & 94.1{\small$\pm$0.6} & 94.7{\small$\pm$0.0} \\
				\raggedright Ours & \textbf{93.7}{\small$\pm$0.2} & 94.6{\small$\pm$0.1} & \textbf{95.1}{\small$\pm$0.4} & 96.3{\small$\pm$0.0} \\
				\bottomrule
			\end{tabular}
		}
		\vspace{0.1cm}
	\end{wraptable}

	On the more challenging CelebA benchmark, our advantage grows more pronounced. While PDE shows slightly higher worst-group accuracy on the original dataset (Table~\ref{tab:performance_2}), its performance drops sharply (by about 34.7\%) when the minority-group distribution is shifted. These observations underscore the importance of modeling both inter-group and intra-group uncertainties—especially given that minority groups in Waterbirds and CelebA constitute only about 1\% of the data and thus are more susceptible to distributional changes. Furthermore, our results highlight that relying on pre-defined test splits with uniformly distributed attributes may offer an overly optimistic view of real-world robustness.
    \vspace{-0.2cm}
	\paragraph{Performance on Original Distributions.}
	On the original (unshifted) versions of CMNIST, Waterbirds, and CelebA (Table~\ref{tab:performance_2}), our method consistently achieves top-tier worst-group accuracy. It secures the highest or near-highest scores across all three benchmarks, confirming that the proposed framework not only excels under distributional shifts but also remains effective when intra-group distributions are stable. Notably, even in these unshifted settings, methods such as Group DRO rely on the strong assumption that each group’s training distribution remains valid at test time. As our results show, explicitly modeling distributional uncertainty within minority groups can yield more reliable robustness, highlighting the limitations of approaches that treat group distributions as fixed. By addressing potential discrepancies at both the inter-group and intra-group levels, our framework provides a stronger foundation for real-world applications.

	\section{Conclusion} \vspace{-0.2cm}
	\paragraph{Summary.} We introduce a novel distributionally robust optimization framework with a hierarchical ambiguity set that explicitly models both inter-group and intra-group distribution shifts—a previously overlooked but practically crucial scenario for underrepresented subpopulations. We reformulate the resulting DRO into a tractable and computationally efficient optimization problem, making it feasible for practical use. Our analysis shows that even modest changes in how minority group samples are partitioned between training and testing can severely degrade the performance of existing robust methods. In contrast, our approach consistently maintains strong performance on both standard and modified benchmarks. \vspace{-0.3cm}
     \paragraph{Outlook.} While our experiments focus on image datasets with explicit feature labels that make intra-group shifts observable, extending our framework to domains such as text—where such labels are difficult to obtain—represents an interesting avenue for future work. More broadly, we hope this work motivates future efforts to systematically uncover and address overlooked failure modes in robust learning.
\newpage

\subsubsection*{Acknowledgments}
This work was supported by the National Research Foundation of Korea (NRF) grant funded by the Korea government (MSIT) (No. RS-2023-00240861), a Korea Institute for Advancement of Technology (KIAT) grant funded by the Korea Government (MOTIE) (RS-2024-00409092, 2024 HRD Program for Industrial Innovation), and Institute of Information \& Communications Technology Planning \& Evaluation(IITP)-Global Data-X Leader HRD program grant funded by the Korea government (MSIT) (IITP-2024-RS-2024-00441244).

\bibliography{ref}
\bibliographystyle{iclr2026_conference}


\newpage
\appendix

\section*{LLM Usage} 
We used large language models solely to aid in polishing the writing of this paper. 

\section{Proof of Theorem~\ref{thm:optimization_reformulation}}
\label{sec:appendix-proof_equiv}

\begin{proof}
We begin with a lemma adapted from \cite{staib2017distributionally}, with minor adjustments to match our framework. This lemma provides an equivalent form for the inner supremum problem of DRO with a $W_\infty$-neighborhood, which is closely related to the representation \eqref{eq:W-representation} of $W_\infty$.

\begin{lemma}\citep[Proposition 3.1]{staib2017distributionally}
	\label{lemma:proposition_1}
	Let $\theta$ be fixed model parameters, and let $c(\cdot,\cdot)$ be a metric on the input space 
	$\mathcal{X}$. For any distribution $P$ on $\mathcal{X}\times\mathcal{Y}$ and for any 
	$\epsilon \ge 0$,
	\begin{equation*}
		\begin{split}
			\mathbb{E}_{P} \left[ \sup_{(x, y) \in B_{\epsilon}(X, Y)} \ell(\theta; (x, y)) \right] 
			= \sup_{W_{\infty}(Q, P) \leq \epsilon} 
			\mathbb{E}_{Q} [\ell(\theta; (X, Y))].
		\end{split}
	\end{equation*}		
	where $B_{\epsilon}(x, y) = \{(x', y') : c((x,y),(x', y')) \leq \epsilon\}$.
\end{lemma}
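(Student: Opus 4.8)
The plan is to establish the claimed identity by proving two matching inequalities, viewing the right-hand side (the worst-case expected loss over the $W_\infty$-ball around $P$) and the left-hand side (the expected worst-case loss over the local balls $B_\epsilon$) as dual descriptions of the same adversary.

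First I would prove the ``easy'' direction $\sup_{W_\infty(Q,P) \le \epsilon} \mathbb{E}_Q[\ell(\theta;(X,Y))] \le \mathbb{E}_P[\sup_{B_\epsilon} \ell]$. Fix any admissible $Q$ with $W_\infty(Q,P) \le \epsilon$. By the Strassen-type characterization underlying representation \eqref{eq:W-representation}, there is a coupling $\gamma$ of $P$ and $Q$ supported on $\{((x,y),(x',y')) : c((x,y),(x',y')) \le \epsilon\}$; that is, if $((X,Y),(X',Y')) \sim \gamma$ then $(X',Y') \in B_\epsilon(X,Y)$ almost surely. For such a coupling $\ell(\theta;(X',Y')) \le \sup_{(x,y)\in B_\epsilon(X,Y)} \ell(\theta;(x,y))$ holds pointwise, and integrating against $\gamma$ (whose first marginal is $P$ and whose second is $Q$) yields $\mathbb{E}_Q[\ell] \le \mathbb{E}_P[\sup_{B_\epsilon} \ell]$. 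Taking the supremum over admissible $Q$ gives this inequality.

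For the reverse inequality I would construct a near-extremal distribution by pushing $P$ forward along an approximate worst-case transport map. For each $\delta > 0$ I select a measurable map $T_\delta$ with $T_\delta(x,y) \in B_\epsilon(x,y)$ and $\ell(\theta; T_\delta(x,y)) \ge \sup_{B_\epsilon(x,y)} \ell - \delta$ (replaced by $\ge 1/\delta$ wherever the inner supremum is $+\infty$), and I set $Q_\delta := (T_\delta)_\# P$. Because $c((x,y), T_\delta(x,y)) \le \epsilon$ everywhere, the pushforward coupling $(\mathrm{id}, T_\delta)_\# P$ certifies $W_\infty(Q_\delta, P) \le \epsilon$, so $Q_\delta$ is admissible on the right. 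The change-of-variables identity $\mathbb{E}_{Q_\delta}[\ell] = \mathbb{E}_P[\ell(\theta; T_\delta(X,Y))]$ then gives $\sup_{W_\infty(Q,P)\le\epsilon} \mathbb{E}_Q[\ell] \ge \mathbb{E}_P[\sup_{B_\epsilon}\ell] - \delta$, and letting $\delta \to 0$ closes the gap.

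The main obstacle is the measurable selection in the reverse direction: I must ensure that $(x,y) \mapsto \sup_{B_\epsilon(x,y)} \ell(\theta;\cdot)$ is measurable and that a measurable (approximate) maximizer $T_\delta$ exists, so that $Q_\delta$ is a genuine probability measure. On a Polish input space with $\ell(\theta;\cdot)$ upper semicontinuous and the set-valued map $(x,y) \mapsto B_\epsilon(x,y)$ closed-valued and measurable, this follows from a standard measurable selection theorem (e.g. Kuratowski--Ryll-Nardzewski), which is precisely the regularity regime in which Proposition 3.1 of \cite{staib2017distributionally} is stated; I would only verify that our cost $c$ and loss $\ell(\theta;\cdot)$ meet these hypotheses. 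A secondary technical point is justifying the existence of the $\epsilon$-tight coupling used in the first direction, which I would take from the same $W_\infty$ duality (Strassen's theorem) that yields \eqref{eq:W-representation}.
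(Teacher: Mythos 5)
The paper itself does not prove this lemma: it is imported verbatim as Proposition~3.1 of \citep{staib2017distributionally}, and the appendix proof of Theorem~\ref{thm:optimization_reformulation} uses it as a black box. Your proposal therefore goes beyond what the paper does by reconstructing the proof of the cited result, and the reconstruction is correct and essentially the standard argument from that reference: the upper bound via a coupling concentrated on $\{c \le \epsilon\}$, and the lower bound via a measurable approximate-maximizer map $T_\delta$ with $Q_\delta = (T_\delta)_\# P$, whose graph coupling certifies $W_\infty(Q_\delta, P) \le \epsilon$. You also correctly identify the genuine technical crux, namely measurability of $(x,y) \mapsto \sup_{B_\epsilon(x,y)} \ell$ and the existence of a measurable $\delta$-selector, which on a Polish space with closed-valued, measurable $B_\epsilon(\cdot)$ follows from Kuratowski--Ryll-Nardzewski (or, absent upper semicontinuity of $\ell$, from von Neumann--Aumann selection with $\sup_{B_\epsilon}\ell$ only universally measurable, handled by completing $P$).

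Two fine points are worth tightening. First, in the easy direction you assume an $\epsilon$-tight coupling exists whenever $W_\infty(Q,P) \le \epsilon$; this is not immediate from the definition of $W_\infty$ as an infimum, but it does follow exactly as you suggest: the representation \eqref{eq:W-representation} gives $Q(A) \le P(A^{\epsilon'})$ for all $\epsilon' > \epsilon$, continuity from above passes this to the closed $\epsilon$-enlargement, and Strassen's theorem applied to the closed set $\{c \le \epsilon\}$ then yields a coupling supported there (alternatively, work at radius $\epsilon + \delta$ and let $\delta \downarrow 0$, at the cost of a continuity argument for $\epsilon \mapsto \mathbb{E}_P[\sup_{B_\epsilon}\ell]$, so the Strassen route is cleaner). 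Second, in this paper's instantiation $c$ is an extended metric (infinite across differing labels), so $B_\epsilon(x,y)$ fixes the label coordinate; your argument survives unchanged since $\{c \le \epsilon\}$ remains closed in the product and $T_\delta$ simply preserves $y$, but the hypothesis ``metric on $\mathcal{X}$'' in the lemma statement should be read with that abuse in mind, as indeed the paper's own statement does.
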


With the ambiguity set \eqref{eq:ambiguity_set}, the DRO \eqref{eq:general-dro} is equivalent to	
\begin{equation}
	\inf_{\theta \in \Theta} \left\{ \sup_{\beta \in \Delta_{m-1}} \sup_{\substack{W_{\infty}(Q_{g}, P_{g}) \leq \epsilon_g \\ g = 1, \ldots, m}} \mathbb{E}_{Q} [\ell(\theta; (X, Y))] \right\},
	\label{eq:optimization_problem_hierarchical_DRO}
\end{equation}
where $Q = \sum_{g=1}^m \beta_g Q_{g}$.

For a fixed $\theta$, the double supremum in \eqref{eq:optimization_problem_hierarchical_DRO} can equivalently be written as
\begin{align*}
	& \sup_{\beta \in \Delta_{m-1}} 
	\sup_{\substack{W_{\infty}(Q_{g}, P_{g}) \leq \epsilon_g \\ g = 1, \ldots, m}} 
	\sum_{g=1}^m \beta_g \, \mathbb{E}_{Q_{g}} [\ell(\theta; (X, Y))] \\
	&\qquad = \sup_{\beta \in \Delta_{m-1}} 
	\sum_{g=1}^m \beta_g \sup_{W_{\infty}(Q_{g}, P_{g}) \leq \epsilon_g} 
	\mathbb{E}_{Q_{g}} [\ell(\theta; (X, Y))].
\end{align*}
By applying Lemma \ref{lemma:proposition_1}, one can upper-bound the inner supremum in the previous display as

\begin{align*}
	\mathbb{E}_{P_{g}} \left[ 
	\sup_{x: \|z(x) - z(X) \| \leq \epsilon_g} \ell(\theta; (x, Y)) 
	\right]
	&= \mathbb{E}_{P_{g}} \left[ 
	\sup_{x: \|z(x) - z(X) \| \leq \epsilon_g} 
	\mathcal{L}\big( f^\theta_L(z(x)), Y \big) 
	\right] \\
	&\leq \mathbb{E}_{P_{g}} \left[ 
	\sup_{z': \|z' - z(X) \| \leq \epsilon_g} 
	\mathcal{L}\big( f^\theta_L(z'), Y \big) 
	\right],
\end{align*}

where $x \mapsto z(x)$ denotes the feature map defined in \eqref{eq:feature}. Thus, the original optimization problem is upper-bounded by
\begin{equation*} 
	\inf_{\theta \in \Theta} \sup_{\beta \in \Delta_{m-1}} \sum_{g=1}^m \beta_g \mathbb{E}_{P_{g}} \left[ \sup_{z': \|z' - z(X) \| \leq \epsilon_g} \cL\big(f^\theta_L(z'), Y\big) \right], 
\end{equation*}
and completes the proof.
\end{proof}

\section{Convergence Analysis of Algorithm~\ref{alg:hierarchical_dro}}
\label{appendix:convergence-proof}

We analyze convergence via $\varepsilon_T$ of the average iterate $\overline{\theta}^{(1:T)}$ :
$$
\varepsilon_T 
= \max_{\beta \in \Delta_{m-1}} L\bigl(\overline{\theta}^{(1:T)}, \beta\bigr) 
\;-\; \min_{\theta \in \Theta}\,\max_{\beta \in \Delta_{m-1}} L(\theta, \beta),
$$
where $
L(\theta, \beta) 
:= \sum_{g=1}^{m} \beta_g \mathbb{E}_{P_g} \left[ 
\sup_{z' : \| z' - z(X) \| \leq \epsilon_g} 
\mathcal{L} \left( f_L^{\theta}(z'), Y \right) 
\right]
$.
In the convex setting, our method achieves $O(1/\sqrt{T})$.

\begin{proposition}[Convergence of Algorithm 1]
	\label{prop:convergence}
	Suppose $\mathcal{L}\bigl(f_L^\theta(z),\,y\bigr)$ is non-negative, convex in $\theta$, $B_{\nabla}$-Lipschitz in $\theta$, and bounded by $B_{\ell}$ for all $(x,y)$ in $\mathcal{X}\times\mathcal{Y}$. In addition, let $\|\theta\|_2 \le B_{\Theta}$ for all $\theta$ in some convex set $\Theta \subset \mathbb{R}^d$, and assume the feature map $z(x)$ is fixed w.r.t.\ $\theta$. Then, the average iterate of Algorithm 1 achieves an expected error at the rate
	
	\[
	\mathbb{E}\bigl[\varepsilon_T\bigr]
	\;\le\;
	2\,m\,
	\sqrt{\frac{\,10\,\bigl(B_{\Theta}^{2}\,B_{\nabla}^{2} \;+\; B_{\ell}^{2}\,\log m\bigr)}{T}}.
	\]
\end{proposition}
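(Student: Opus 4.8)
The plan is to reduce the minimax problem to a standard convex--concave stochastic saddle-point problem and then invoke the textbook convergence guarantee for stochastic mirror descent. The first step is to absorb the inner maximization over $z'$ into a single \emph{robust loss}
$$\ell^*(\theta;(x,y)) := \sup_{z' : \|z' - z(x)\| \le \epsilon_{g}} \mathcal{L}\bigl(f_L^\theta(z'),y\bigr),$$
and to verify that it inherits the structural properties of $\mathcal{L}$. Because $z(x)$ is fixed with respect to $\theta$, the feasible set $\{z' : \|z'-z(x)\|\le\epsilon_g\}$ does not move with $\theta$; hence $\ell^*$ is a supremum, over a fixed index set, of functions that are each convex, $B_\nabla$-Lipschitz, non-negative, and bounded by $B_\ell$ in $\theta$. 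A pointwise supremum preserves all four properties, so $\ell^*(\cdot;(x,y))$ is itself convex, $B_\nabla$-Lipschitz, non-negative, and bounded by $B_\ell$. Consequently $L(\theta,\beta)=\sum_{g=1}^m \beta_g\,\mathbb{E}_{P_g}[\ell^*(\theta;(X,Y))]$ is convex in $\theta\in\Theta$ and linear (hence concave) in $\beta\in\Delta_{m-1}$, i.e.\ a convex--concave saddle function on a product of two convex compact sets.

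The second step is to recognize Algorithm~\ref{alg:hierarchical_dro} as stochastic mirror descent on this saddle function, run simultaneously on the two blocks with two different geometries: the $\theta$-update (Line~13) is projected stochastic gradient descent, i.e.\ mirror descent with the Euclidean regularizer $\tfrac12\|\theta\|_2^2$ over the ball of radius $B_\Theta$, while the $\beta$-update (Lines~10--12) is exponentiated gradient ascent, i.e.\ mirror descent with the negative-entropy regularizer over $\Delta_{m-1}$. Sampling $g\sim\mathrm{Uniform}$ and $(x,y)\sim P_g$ produces unbiased stochastic subgradients: the $\theta$-gradient $\beta_g\nabla_\theta\ell^*(\theta;(x,y))$ has expectation $\tfrac1m\nabla_\theta L(\theta,\beta)$, and the loss value $\ell^*(\theta;(x,y))$ fed into the $\beta$-exponent likewise yields an unbiased estimate of $\tfrac1m\nabla_\beta L(\theta,\beta)$. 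I would then bound the relevant (dual-)norms of these stochastic gradients: Lipschitzness gives $\|\nabla_\theta\ell^*\|_2\le B_\nabla$, and boundedness gives that the per-group loss vector has $\ell_\infty$-norm at most $B_\ell$, which is exactly the dual norm matched to the entropic mirror map on the simplex.

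The third step is to apply the standard bound for stochastic mirror descent on convex--concave saddle problems \citep{sagawa2019distributionally}: with the two blocks tuned jointly, the expected duality gap of the average iterate is controlled by the sum of the two per-block ``Bregman-radius $\times$ squared-gradient'' terms, namely the Euclidean term $B_\Theta^2 B_\nabla^2$ and the entropic term $B_\ell^2\log m$ (the $\log m$ being the entropy radius of $\Delta_{m-1}$). Optimizing the step sizes $\eta_\theta,\eta_\beta$ to balance these terms yields a gap of order $\sqrt{(B_\Theta^2 B_\nabla^2 + B_\ell^2\log m)/T}$; collecting the numerical constants from the mirror-descent regret bound produces the explicit factor $10$ inside the root. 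Finally, the leading factor $2m$ arises from the $\tfrac1m$ bias introduced by uniform group sampling: since the stochastic gradients are unbiased for $\tfrac1m\nabla L$, the duality gap for $L$ itself is $m$ times the gap obtained for $\tfrac1m L$, which is where the $m$ enters.

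The step I expect to be the main obstacle is the saddle-point bookkeeping in the third step: correctly combining the two heterogeneous mirror-descent geometries into a single guarantee with the ``sum under one square root'' form, and cleanly propagating the $\tfrac1m$ sampling factor through both blocks so that it surfaces as the clean multiplicative $m$ rather than contaminating the constants. The reduction in the first step (transferring convexity, Lipschitzness, and boundedness through the inner supremum) is conceptually routine but logically essential, since it is what makes the whole problem amenable to the convex saddle-point machinery in the first place.
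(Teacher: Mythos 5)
Your proposal is correct and follows essentially the same route as the paper: both reduce the problem to a convex--concave stochastic saddle-point problem by pushing the structural properties of $\mathcal{L}$ through the inner supremum (valid because the constraint set $\|z'-z(x)\|\le\epsilon_g$ does not depend on $\theta$), verify unbiasedness of the stochastic gradients under uniform group sampling, and invoke the Nemirovski et al.\ / Sagawa et al.\ online mirror descent bound with Euclidean geometry on $\Theta$ and entropic geometry on $\Delta_{m-1}$. The only cosmetic difference is the bookkeeping of the factor $m$: you rescale the objective to $\tfrac1m L$ and multiply the gap back, whereas the paper builds the importance weight $m\cdot\mathbf{1}[G=g]$ into the stochastic estimator $F_g$ — these yield the identical bound.
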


\begin{proof}
	Each iteration samples $G \sim \mathrm{Unif}\{1, \dots, m\}$ and $(X, Y) \sim P_G$. The resulting joint sample $\xi = (X, Y, G)$ is drawn i.i.d. from the mixture distribution \( q := \frac{1}{m} \sum_{g=1}^m P_g \).
	
	For each group \( g \in \{1, \dots, m\} \), define the stochastic loss function
	\[
	F_g(\theta; \xi) := m \cdot \mathbf{1}[G = g] \cdot \sup_{\|z' - z(X)\| \le \epsilon_g} \mathcal{L}\bigl(f_L^\theta(z'), Y\bigr),
	\]
	and let
	\[
	f_g(\theta) := \mathbb{E}_{P_g} \left[
	\sup_{\|z' - z(X)\| \le \epsilon_g} \mathcal{L}\bigl(f_L^\theta(z'), Y\bigr)
	\right].
	\]
	The total objective is then \( L(\theta, \beta) = \sum_{g=1}^m \beta_g f_g(\theta) \).
	
	We now verify the conditions required to apply the standard online mirror descent (OMD) regret bound \citep{nemirovski2009robust}:
	
	\begin{enumerate}[label=(\Alph*)]
		\item Convexity.\; For each $g$, the inner function $\mathcal{L}(f_L^\theta(z'), Y)$ is convex and non-negative in $\theta$, and the supremum preserves convexity via Danskin’s theorem. Thus, $f_g(\theta)$ is convex.
		
		\item Expectation form.\; We have
		\[
		\mathbb{E}_{\xi \sim q} \bigl[ F_g(\theta; \xi) \bigr]
		= \frac{1}{m} \sum_{g'=1}^m \mathbb{E}_{(X, Y) \sim P_{g'}} \left[
		m \cdot \mathbf{1}[g' = g] \cdot \sup_{\|z' - z(X)\| \le \epsilon_g} \mathcal{L}\bigl(f_L^\theta(z'), Y\bigr)
		\right]
		= f_g(\theta).
		\]
		
		\item Unbiased subgradients.\; By Danskin’s theorem, the mapping $\theta \mapsto \sup_{z'} \mathcal{L}(f_L^\theta(z'), Y)$ is subdifferentiable. Hence, $\nabla_\theta F_g(\theta; \xi)$ is an unbiased subgradient:
		\[
		\mathbb{E}_{\xi \sim q} \bigl[ \nabla_\theta F_g(\theta; \xi) \bigr] = \nabla_\theta f_g(\theta).
		\]
	\end{enumerate}
	
	With the conditions (A)–(C) established, and using the boundedness assumptions:
	\[
	\|\theta\|_2 \le B_{\Theta}, \quad \|\nabla_\theta \mathcal{L}\| \le B_{\nabla}, \quad \mathcal{L} \le B_{\ell},
	\]
	the standard OMD regret bound \citep{nemirovski2009robust, sagawa2019distributionally} yields
	\[
	\mathbb{E}[\varepsilon_T] 
	\;\le\; 
	2m\, \sqrt{
		\frac{10 \bigl( B_{\Theta}^2 B_{\nabla}^2 + B_{\ell}^2 \log m \bigr)}{T}
	},
	\]
	completing the proof.
\end{proof}

The convergence guarantee in Proposition~\ref{prop:convergence} is derived under the assumption that the feature map $z(\cdot)$ is fixed with respect to~$\theta$. Under this assumption, the resulting optimization problem becomes convex in~$\theta$, which allows us to establish the stated $O(1/\sqrt{T})$ convergence rate. Accordingly, Proposition~\ref{prop:convergence} should be interpreted as characterizing the behavior of Algorithm~\ref{alg:hierarchical_dro} in the idealized regime where 
the representation is fixed, rather than as a guarantee for the full nonconvex 
joint optimization of all network parameters.

\section{Interpreting Latent Perturbation Regularization}

To clarify the intuition behind our latent perturbation framework, we employ a first-order Taylor expansion of the loss function. This approximation shows that the innermost supremum in our optimization problem can be interpreted as the original loss \( \mathcal{L}(f^{\theta}(x), y) \) plus an additional regularization term involving the dual norm of the gradient with respect to the latent representation. Specifically,

\begin{align*}
	\sup_{\| z' - z(x) \| \leq \epsilon} \mathcal{L}(f_{L}^\theta(z'), y) 
	&= \sup_{\| z' - z(x) \| \leq \epsilon} \mathcal{L}\left( f_{L}^\theta(z(x) + (z' - z(x))), y \right) \\
	&\approx \sup_{\| z' - z(x) \| \leq \epsilon} \left[ \mathcal{L}\left( f_{L}^\theta(z(x)), y \right) + \nabla_{z} \mathcal{L}\left( f_{L}^\theta(z(x)), y \right)^\top (z' - z(x)) \right] \\
	&= \mathcal{L}\left( f^\theta(x), y \right) + \sup_{\| z' - z(x) \| \leq \epsilon} \nabla_{z} \mathcal{L}\left( f_{L}^\theta(z(x)), y \right)^\top (z' - z(x)) \\
	&= \mathcal{L}\left( f^\theta(x), y \right) + \epsilon \left\| \nabla_{z} \mathcal{L}\left( f_{L}^\theta(z(x)), y \right) \right\|_*,
\end{align*}

where \( \| \cdot \|_* \) denotes the dual norm corresponding to \( \| \cdot \| \). This added regularization term penalizes large gradients in the latent space, promoting robustness by ensuring that small perturbations in \( z(x) \) do not lead to significant changes in the loss. By minimizing this term alongside the original loss, the model gains stability and improved performance under real-world distributional shifts.

\section{Dataset Details}
\label{sec:appendix-dataset}
\subsection{Original Dataset}
\textbf{Colored MNIST (CMNIST)} \citep{arjovsky2019invariant}. The CMNIST dataset is designed for a noisy digit recognition task, incorporating color as a spurious attribute. The dataset is divided into four distinct groups based on class and color: $g_1 = \{0, \text{green}\}$, $g_2 = \{1, \text{green}\} $, $g_3 = \{0, \text{red}\}$, and $g_4 = \{1, \text{red}\}$. It involves two classes: class 0 includes the digits $(0, 1, 2, 3, 4)$ and class 1 includes the digits $(5, 6, 7, 8, 9)$. The training set consists of 30,000 samples, where for class 0, the ratio of red to green samples is $8:2$, while for class 1, this ratio is $2:8$. The validation set, which comprises 10,000 samples, maintains an equal distribution of color across both classes, with a $1:1$ ratio of red to green samples for each class. The test set includes 20,000 samples and introduces a more pronounced group distribution shift: class 0 has a red to green sample ratio of $1:9$, and class 1 has a ratio of $9:1$. Following the approach proposed by \cite{arjovsky2019invariant}, labels in the dataset are flipped with a probability of 0.25.

\begin{figure}[h!]
	\centering
	\subfigure[{0, green}]{
		\includegraphics[width=3cm, height=3.5cm]{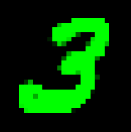}
	}
	\subfigure[{1, green}]{
		\includegraphics[width=3cm, height=3.5cm]{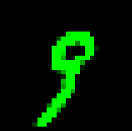}
	}
	\subfigure[{0, red}]{
		\includegraphics[width=3cm, height=3.5cm]{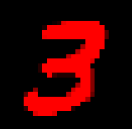}
	}
	\subfigure[{1, red}]{
		\includegraphics[width=3cm, height=3.5cm]{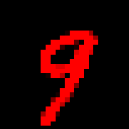}
	}
	\caption{Example images from the CMNIST dataset. The groups are $g_1 = \{0, \text{green}\}$, $g_2 = \{1, \text{green}\}$, $g_3 = \{0, \text{red}\}$, and $g_4 = \{1, \text{red}\}$.}
	\label{fig:cmnist_examples}
\end{figure}

\textbf{Waterbirds} \citep{sagawa2019distributionally}. The Waterbirds dataset is designed to classify images of birds into two categories: ``waterbirds'' and ``landbirds'', with a deliberate introduction of spurious correlations between the bird type and the background. The dataset is divided into four distinct groups based on bird type and background: $g_1 = \{$landbird, land$\}$, $g_2 = \{$landbird, water$\}$, $g_3 = \{$waterbird, land$\}$, and $g_4 = \{$waterbird, water$\}$. This synthetic dataset is created by combining bird images from the Caltech-UCSD Birds 200-2011 (CUB) dataset \citep{wah2011caltech} with backgrounds from the Places dataset \citep{zhou2017places}. Waterbird species, such as albatross, auklet, cormorant, frigatebird, and others, are grouped together, while all other species are classified as landbirds. The dataset comprises 4,795 training samples distributed as follows: 3,498 landbirds on land backgrounds, 1,057 waterbirds on water backgrounds, 184 landbirds on water backgrounds, and 56 waterbirds on land backgrounds. This setup highlights the minority groups and the inherent spurious correlations. In contrast to the training set, the validation and test sets are constructed to have an equal number of samples for each group within each class. The minority group, waterbirds on land, emphasizes the skewed distribution of the dataset, making it suitable for studying the impact of spurious correlations on model performance. The Waterbirds dataset is accessible through the Wilds library in PyTorch \citep{koh2021wilds}.

\begin{figure}[ht]
	\centering
	\subfigure[{landbird, land}]{
		\includegraphics[width=3cm, height=3.5cm]{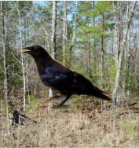}
	}
	\subfigure[{landbird, water}]{
		\includegraphics[width=3cm, height=3.5cm]{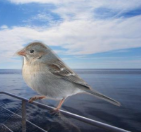}
	}
	\subfigure[{waterbird, land}]{
		\includegraphics[width=3cm, height=3.5cm]{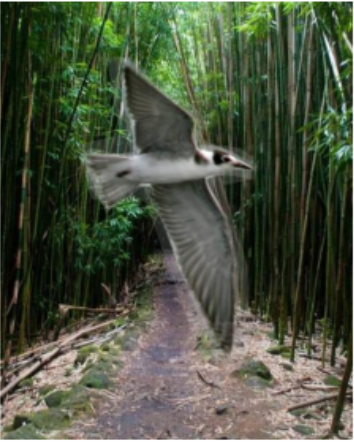}
	}
	\subfigure[{waterbird, water}]{
		\includegraphics[width=3cm, height=3.5cm]{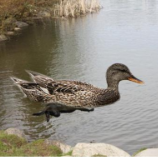}
	}
	\caption{Example images from the Waterbirds dataset. The groups are $g_1 = \{\text{landbird, land}\}$, $g_2 = \{\text{landbird, water}\}$, $g_3 = \{\text{waterbird, land}\}$, and $g_4 = \{\text{waterbird, water}\}$.}
	\label{fig:waterbirds_examples}
\end{figure}

\textbf {CelebA} \citep{liu2015deep}. The CelebA dataset is used for a hair-color prediction task with facial images of celebrities, where the target labels are ``blond'' and ``non-blond'' hair colors. For experimental purposes, the dataset is divided into four distinct groups based on hair color and gender: $g_1 = \{\text{non-blond hair, female}\}$, $g_2 = \{\text{non-blond hair, male}\}$, $g_3 = \{\text{blond hair, female}\}$, and $g_4 = \{\text{blond hair, male}\}$. Gender serves as a spurious feature, introducing correlations between the hair color and gender of individuals. The training set consists of 162,770 images distributed as follows: 71,629 females with non-blond hair, 66,874 males with non-blond hair, 22,880 females with blond hair, and 1,387 males with blond hair. The validation set includes 19,867 images, with 8,535 females with non-blond hair, 8,276 males with non-blond hair, 2,874 females with blond hair, and 182 males with blond hair. The test set comprises 19,962 images, with 9,767 females with non-blond hair, 7,535 males with non-blond hair, 2,480 females with blond hair, and 180 males with blond hair. The minority group in this dataset is males with blond hair, which constitutes a small fraction of the data, highlighting the skewed distribution and the presence of spurious correlations.

\begin{figure}[ht]
	\centering
	\subfigure[{non-blond, female}]{
		\includegraphics[width=3cm, height=3.5cm]{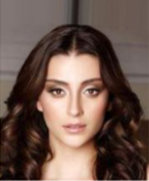}
	}
	\subfigure[{non-blond, male}]{
		\includegraphics[width=3cm, height=3.5cm]{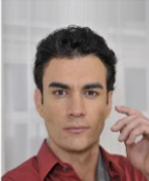}
	}
	\subfigure[{blond, female}]{
		\includegraphics[width=3cm, height=3.5cm]{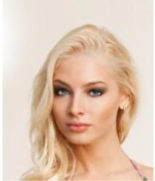}
	}
	\subfigure[{blond, male}]{
		\includegraphics[width=3cm, height=3.5cm]{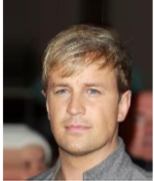}
	}
	\caption{Example images from the CelebA dataset. The groups are $g_1 = \{\text{non-blond hair, female}\}$, $g_2 = \{\text{non-blond hair, male}\}$, $g_3 = \{\text{blond hair, female}\}$, and $g_4 = \{\text{blond hair, male}\}$.}
	\label{fig:celeba_examples}
\end{figure}

\subsection{Modified Datasets}

Building on the previously introduced datasets—CMNIST, Waterbirds, and CelebA—we constructed modified versions of these datasets by applying conditional distribution shifts to the minority groups, simulating real-world scenarios. Below, we detail the modifications for each dataset and illustrate these shifts with corresponding figures.

\textbf {Modified CMNIST}.
In the CMNIST dataset, we created a modified version where the minority group’s images (label 1, red) were rotated by 90 degrees in the test set, while they remained unrotated in the training set. This manipulation simulates conditional distribution shifts often encountered in real-world applications. Figure~\ref{fig:cmnist_distribution_shift} provides an illustration of this shift, showing example images from the train and test sets.

\begin{figure}[ht]
	\centering
	\subfigure[Train set]{
		\includegraphics[width=3cm, height=3.5cm]{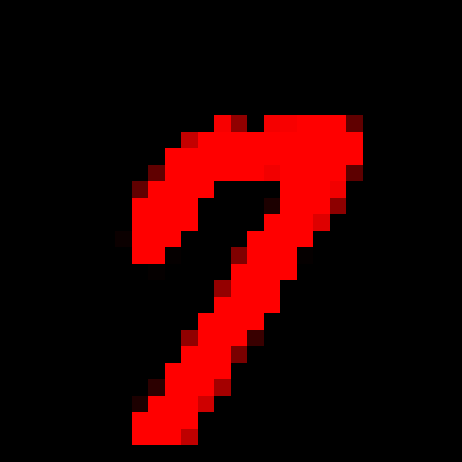}
		\includegraphics[width=3cm, height=3.5cm]{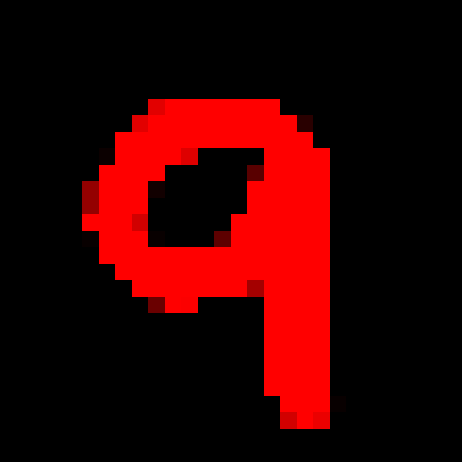}
	}
	\hspace{0.5cm}
	\subfigure[Test set]{
		\includegraphics[width=3cm, height=3.5cm]{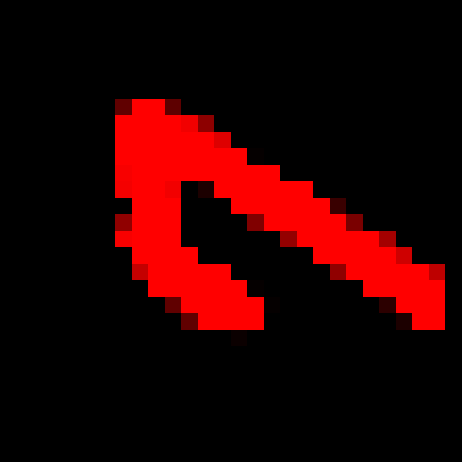}
		\includegraphics[width=3cm, height=3.5cm]{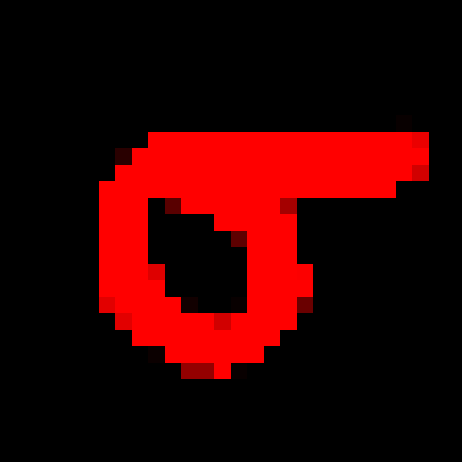}
	}
	\caption{Example of conditional distribution shift in the CMNIST dataset, where the minority group (label 1, red) images are rotated by 90 degrees in the test set, while they are unrotated in the training set.}
	\label{fig:cmnist_distribution_shift}
\end{figure}

\textbf {Modified Waterbirds}.	
For the Waterbirds dataset, we constructed a modified version where the minority group (waterbird, land background) was designed to have a shift in species composition between the train and test sets. Specifically, the training set included only waterfowl species, such as Gadwall, Grebe, Mallard, Merganser, and Pacific Loon, while the test set contained exclusively seabird species, including Albatross, Auklet, Cormorant, Frigatebird, Fulmar, Gull, Jaeger, Kittiwake, Pelican, Puffin, Tern, and Guillemot. During the dataset construction process, we identified and corrected a mislabeling issue involving three species—Western Wood-Pewee, Eastern Towhee, and Western Meadowlark—which had been incorrectly labeled as waterbirds instead of landbirds \citep{asgari2022masktune}. Figure~\ref{fig:waterbirds_distribution_shift} illustrates this shift, highlighting the separation of species between the train and test sets.

\begin{figure}[ht]
	\centering
	\subfigure[Train set]{
		\includegraphics[width=3cm, height=3.5cm]{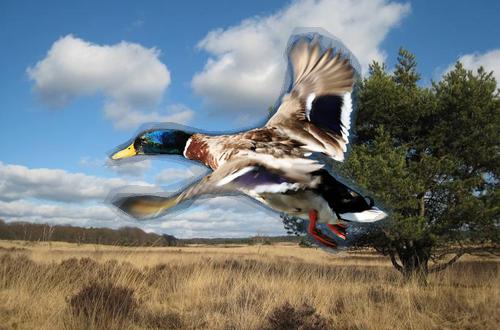}
		\includegraphics[width=3cm, height=3.5cm]{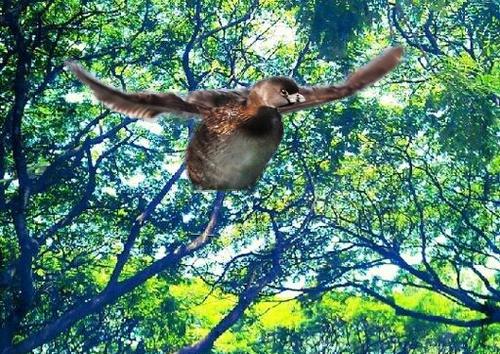}
	}
	\hspace{0.5cm}
	\subfigure[Test set]{
		\includegraphics[width=3cm, height=3.5cm]{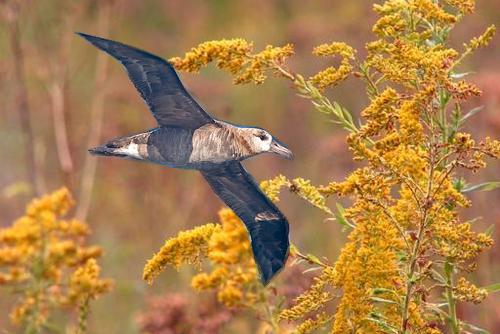}
		\includegraphics[width=3cm, height=3.5cm]{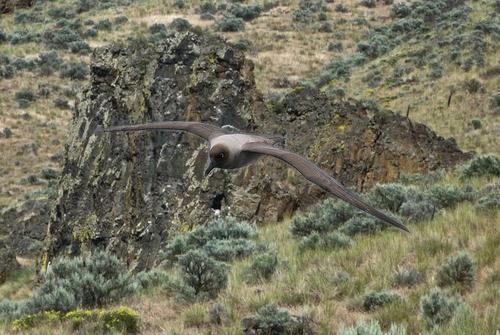}
	}
	\caption{Example of conditional distribution shift in the Waterbirds dataset, where the minority group (waterbird on land background) consists of waterfowl in the training set and seabirds in the test set.}
	\label{fig:waterbirds_distribution_shift}
\end{figure}

To further highlight the impact of this modification, Figure~\ref{fig:waterbird_comparison} compares the original distribution and the modified distribution shift scenarios. In the original dataset (Figure~\ref{fig:waterbird_original}), bird species in the minority group are relatively evenly distributed across train, validation, and test sets. However, in the modified version (Figure~\ref{fig:waterbird_shift}), the training set contains only waterfowl, while the test set is composed entirely of seabirds, creating a distinct distribution shift.

\begin{figure}[ht]
	\centering
	\subfigure[Original distribution of the minority group.]{
		\includegraphics[width=0.48\textwidth]{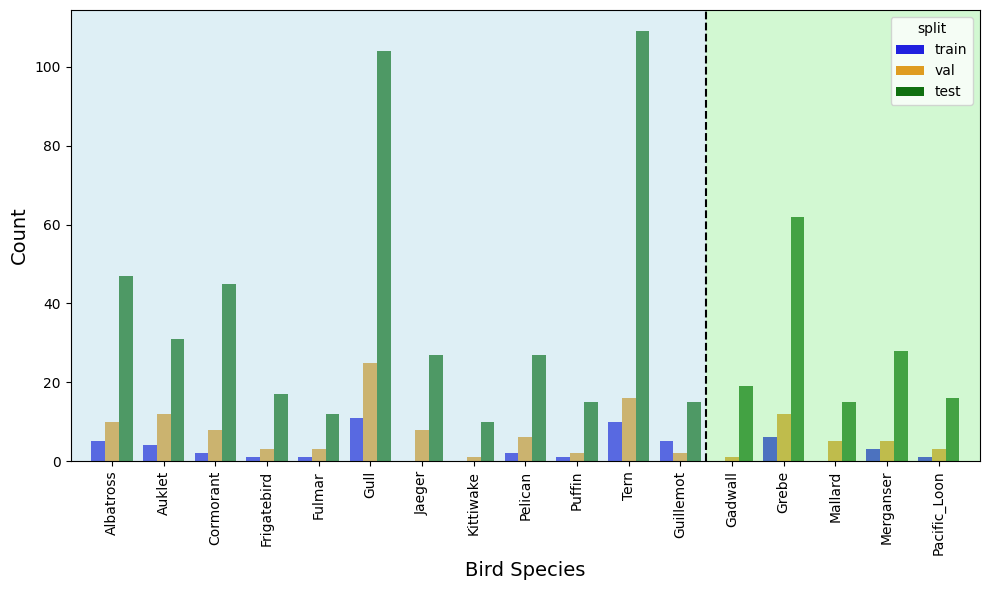}
		\label{fig:waterbird_original}
	}
	\hfill
	\subfigure[Distribution shift of the minority group (only waterfowl in training, only seabirds in testing).]{
		\includegraphics[width=0.48\textwidth]{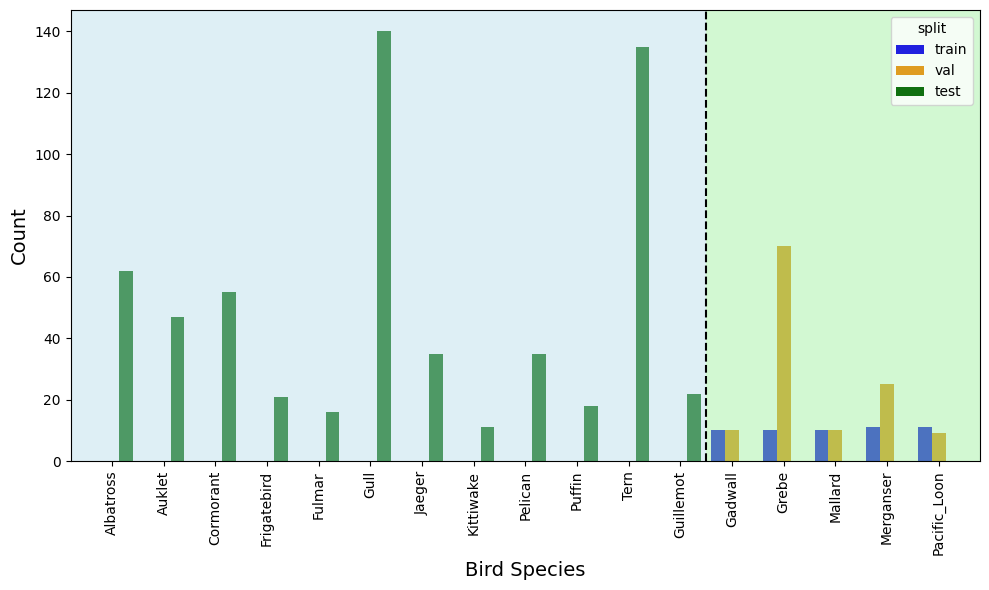}
		\label{fig:waterbird_shift}
	}
	\caption{Comparing the original and shifted distributions of the minority group (waterbird, land background) in the Waterbirds dataset (left: seabirds, right: waterfowl, split by dashed line).}
	\label{fig:waterbird_comparison}
\end{figure}

\textbf {Modified CelebA}.
In the CelebA dataset, we modified the minority group (blond hair, male) to have different attributes between the train and test sets. Specifically, the training set contained only images without glasses, while the test set contained only images with glasses. This modification reflects real-world distribution shifts where rare attributes in small minority groups may change across different distributions, impacting model performance. Figure~\ref{fig:celeba_distribution_shift} shows example images demonstrating this shift.

\begin{figure}[ht]
	\centering
	\subfigure[Train set]{
		\includegraphics[width=3cm, height=3.5cm]{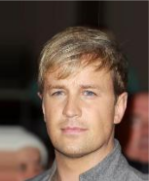}
		\includegraphics[width=3cm, height=3.5cm]{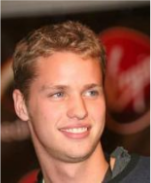}
	}
	\hspace{0.5cm}
	\subfigure[Test set]{
		\includegraphics[width=3cm, height=3.5cm]{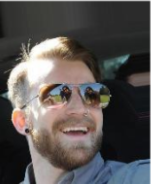}
		\includegraphics[width=3cm, height=3.5cm]{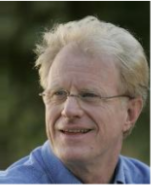}
	}
	\caption{Example of conditional distribution shift in the CelebA dataset, where the minority group (blond hair, male) included only images without glasses in the training set and images with glasses in the test set.}
	\label{fig:celeba_distribution_shift}
\end{figure}

Figure~\ref{fig:celebA_comparison} provides a detailed comparison of the original and modified distributions for the CelebA dataset. In the original distribution (Figure~\ref{fig:celebA_original}), the minority group is predominantly represented by the ``Without Eyeglasses'' category across train, validation, and test sets, with relatively few examples in the ``With Eyeglasses'' category. In the modified version (Figure~\ref{fig:celebA_shift}), the training set consists exclusively of ``Without Eyeglasses'' images, while the test set contains only ``With Eyeglasses'', creating a clear disjoint in key attributes between training and testing phases.

\begin{figure}[ht]
	\centering
	\subfigure[Original distribution of the minority group.]{ 
		\includegraphics[width=0.48\textwidth]{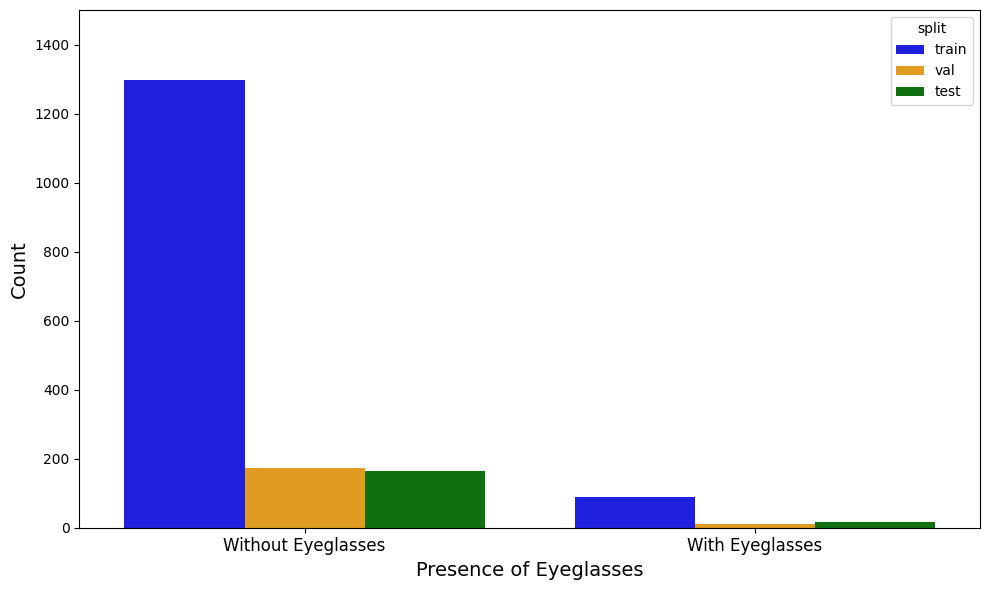}
		\label{fig:celebA_original}
	}
	\hfill
	\subfigure[Distribution shift of the minority group (only ``Without Eyeglasses'' in training, only ``With Eyeglasses'' in testing).]{
		\includegraphics[width=0.48\textwidth]{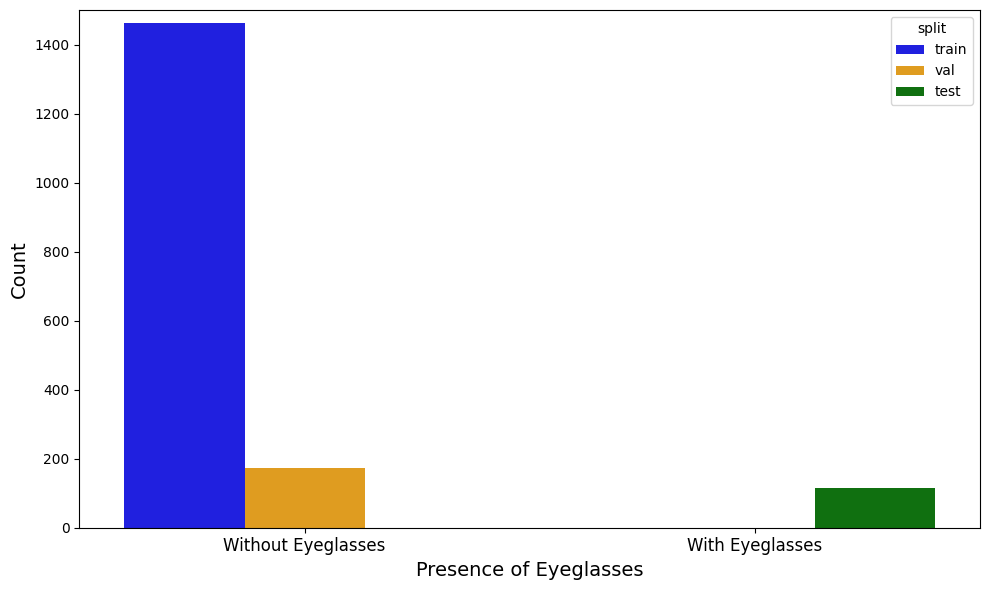}
		\label{fig:celebA_shift}
	}
	\caption{Comparing the original and shifted distributions of the minority group (blond hair, male) in the CelebA dataset (left: ``Without Eyeglasses'', right: ``With Eyeglasses'').}
	\label{fig:celebA_comparison}
\end{figure}

By introducing these conditional distribution shifts, our modified datasets simulate real-world challenges, particularly in scenarios where small minority groups are highly susceptible to such changes. These constructions not only reflect practical settings but also provide realistic benchmarks for evaluating the robustness and generalization capabilities of machine learning models under diverse and challenging conditions.

\section{Baseline Details}
\label{sec:appendix-baseline}

We compare our method against a range of representative baselines:

\begin{itemize}[leftmargin=*]
	\item \textbf{ERM}:
	ERM optimizes average accuracy on the training set without any robust objective or group-specific considerations.
	
	\item \textbf{Group DRO} \citep{sagawa2019distributionally}:
	A canonical approach for mitigating spurious correlations using known group labels. By partitioning data into predefined groups and minimizing the worst-case group loss, Group DRO aims to improve the worst-group accuracy relative to standard ERM.
	
	\item \textbf{JTT} \citep{liu2021just}:
	A two-step method that first trains an ERM model to identify misclassified samples (viewed as proxies for minority groups), then upsamples these samples and retrains a classifier.
	
	\item \textbf{CnC} \citep{zhang2022correct}:
	Identifies samples that share the same true class but differ in spurious attributes by analyzing ERM outputs, then trains a robust model with a contrastive learning objective. This does not require explicit group labels.
	
	\item \textbf{SSA} \citep{nam2022spread}:
	Infers latent groups via a loss-based criterion, then applies Group DRO to improve robustness. This method partially automates the discovery of group boundaries without needing full group labels.
	
	\item \textbf{LISA} \citep{yao2022improving}:
	Mitigates spurious correlations by using Mixup strategies. Depending on the dataset, LISA employs different Mixup variants (e.g., classic Mixup, CutMix, Manifold Mix) to interpolate images within the same label or same spurious attribute, thereby reducing reliance on superficial cues.
	
	\item \textbf{DFR} \citep{kirichenkolast}:
	Balances the dataset by subsampling to match the minority group size (the “Subsample” strategy), then retrains an ERM model on this balanced data. This simple yet effective approach can substantially improve worst-group performance. For a fair comparison,  following \cite{deng2024robust}, we evaluate DFR using only the training dataset for both training and fine-tuning, ensuring consistency across methods, which is denoted as DFR$^{\text{tr}}$.
	
	\item \textbf{PDE} \citep{deng2024robust}:
	Progressively expands the training dataset during the training process, starting with a balanced subset to prevent the model from learning spurious correlations. This approach aims to enhance robustness across all groups, including underrepresented ones.
	
	\item \textbf{GIC} \citep{hanimproving}:
	Uses a two-step pipeline where group membership is partially inferred, then a robust optimization (e.g., Group DRO) is applied. Similar to LISA, it can incorporate tailored Mixup strategies depending on the dataset’s characteristics.
\end{itemize}

\section{Implementation Details}
\label{sec:appendix-implementation}

For experiments involving our newly constructed datasets, we reimplemented both our proposed method and the relevant baselines. When certain baselines lacked reported results for a given dataset, we used the performance from \cite{zhang2022correct} and \cite{hanimproving} if available; otherwise, we performed our own reimplementations under consistent settings. In particular, for the original CMNIST dataset, we reimplemented experiments for DFR and PDE, since their original papers did not include CMNIST results. In all other cases, we referenced performance metrics from each baseline’s primary source. All experiments were conducted on an NVIDIA GeForce RTX 3090 GPU, and we make our code available at \url{https://github.com/Sung-Ho-Jo/hierarchical-dro}.

Across all datasets, we employed the torchvision implementation of ResNet-50 pretrained on ImageNet, training with SGD at a momentum of \(0.9\) and a batch size of \(128\), following \cite{sagawa2019distributionally}. Our approach also introduces a perturbation parameter $\epsilon$ to control within-group uncertainty. Specifically, we define $\epsilon_g = \epsilon / \sqrt{n_g}$, where $n_g$ represents the size of group $g$ in the training data. To determine $\epsilon$, we performed a grid search over the set $\{12/255,\, 24/255,\, 36/255,\, 48/255,\, 60/255,\, 72/255,\, 84/255,\, 96/255\}$, scaling each value by $\sqrt{n_{\text{min}}}$. Here, $n_{\text{min}} = \min_g n_g$ denotes the smallest group size in the training set. Additionally, we tuned the generalization adjustment parameter \(C\) over \(\{0,\,1,\,2,\,3\}\), as described in Section~3.3 of \cite{sagawa2019distributionally}. This setup was applied consistently across every dataset.

For CMNIST, we conducted a grid search over learning rates \(\{10^{-4},\,10^{-3},\,10^{-2}\}\) and \(\ell_2\) penalties \(\{10^{-1},\,10^{-2},\,10^{-4}\}\) for \(50\) epochs. Due to instability in training with the selected parameter combinations in the original Group DRO implementation, we applied a ReduceLROnPlateau scheduler starting at a learning rate of 0.01, using it consistently for both our method and Group DRO to ensure fairness. For Waterbirds, the learning rate was tuned over \(\{10^{-3},\,10^{-4},\,10^{-5}\}\) and the \(\ell_2\) penalty over \(\{10^{-4},\,10^{-1},\,1\}\), with training conducted for \(300\) epochs. For CelebA, the learning rate was tuned over \(\{10^{-4},\,10^{-5}\}\) and the \(\ell_2\) penalty over \(\{10^{-4},\,10^{-2},\,1\}\) for \(30\) epochs. We referred to prior works including \cite{yao2022improving} and \cite{ghosal2023distributionally} to guide these hyperparameter search ranges.

\section{Tuning and Analysis of \texorpdfstring{$\epsilon$}{epsilon}} \label{sec:tuning_epsilon}

\subsection{Selection of \texorpdfstring{$\epsilon$}{epsilon}} 

As is common in DRO problems, selecting the size of the ambiguity set $\epsilon$ is challenging. Since the extent of distributional shifts is generally unknown, many robust learning methods commonly rely on heuristic or even arbitrary choices in practice. We propose a simple data-driven procedure that partitions the training data using a one-dimensional t-SNE \citep{van2008visualizing} ordering. This avoids reliance on shifted or test-time data while still creating validation splits that mimic intra-group distribution shifts. Our design is inspired by prior work~\citep{duchi2021learning}, which used training-data partitioning to simulate distributional shifts.

Specifically, we project each $z(x_i)$ onto a one-dimensional space using t-SNE, rank samples within each group, and split them into five quantiles. The two extreme quantiles (top 20\% and bottom 20\%) are alternately held out as validation sets, with the remaining 80\% used for training. This procedure creates realistic validation shifts that disproportionately affect minority groups.

Finally, we select the value of $\epsilon$ that maximizes minority-group accuracy across these validation setups, ensuring that the chosen perturbation radius provides meaningful robustness for underrepresented subpopulations.

\subsection{Visualizing t-SNE Ordering for Minority Groups}
\label{sec:appendix_tsne_visualization}

To highlight the utility of t-SNE ordering in simulating realistic distribution shifts, we present visual examples of waterbirds from the top 20\% and bottom 20\% quantiles of the t-SNE projections. This approach effectively partitions samples based on semantically meaningful intra-group differences, enabling validation splits that closely mimic real-world distribution shifts.

\begin{figure}[ht]
	\centering
	\subfigure[Example images from the top 20\% of t-SNE ordering] {
		\includegraphics[width=0.8\textwidth]{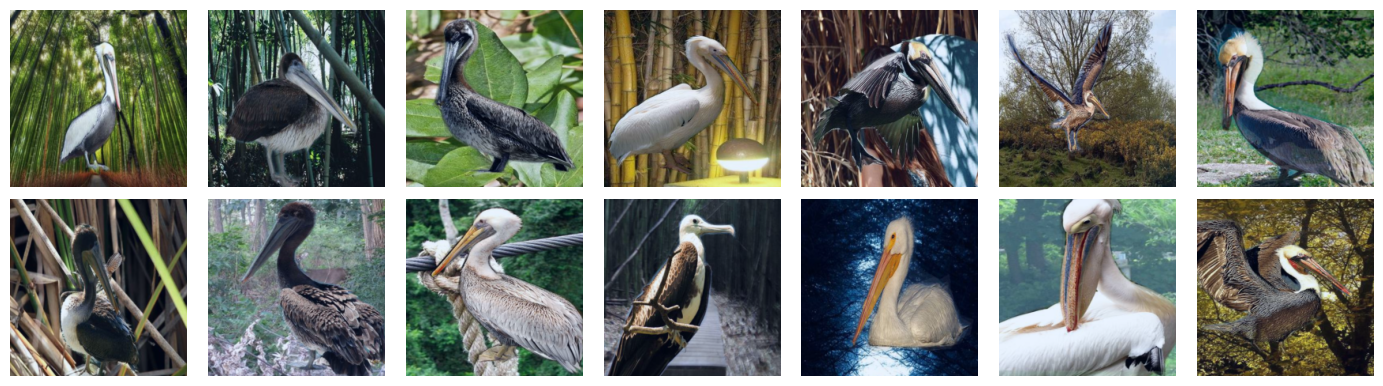}
		\label{fig:tsne_top20_appendix}
	}
	
	\vspace{0.5cm}
	
	\subfigure[Example images from the bottom 20\% of t-SNE ordering] {
		\includegraphics[width=0.8\textwidth]{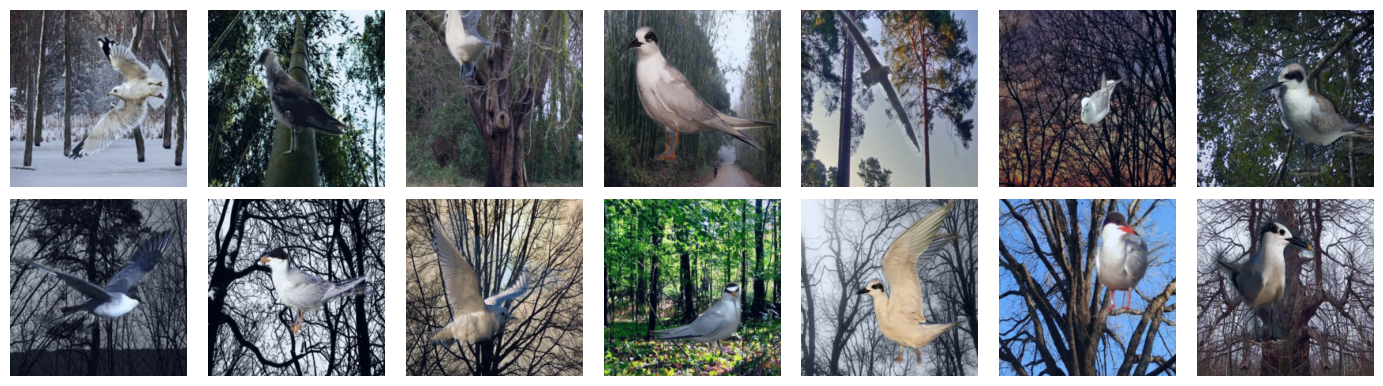}
		\label{fig:tsne_bottom20_appendix}
	}
	
	\caption{t-SNE-based ordering reveals subtle distinctions within the minority group. 
		(a) The top quantile features waterbirds with longer beaks, while 
		(b) the bottom quantile features those with shorter beaks.}
	\label{fig:tsne_results_appendix}
\end{figure}

As shown in Figure~\ref{fig:tsne_results_appendix}, the t-SNE ordering captures nuanced intra-group differences within this minority group. In the top 20\% quantile (Figure~\ref{fig:tsne_top20_appendix}), waterbirds with longer beaks dominate; in the bottom 20\% quantile (Figure~\ref{fig:tsne_bottom20_appendix}), shorter-beaked waterbirds are more prevalent. This contrast illustrates how a t-SNE-driven partition can create validation splits that mimic real-world distribution shifts. This method not only emphasizes variations within groups but also systematically evaluates the model’s robustness under challenging real-world conditions.

\subsection{Impact of \texorpdfstring{$\epsilon$}{epsilon} on Robustness}
\label{sec:appendix_impact_of_epsilon}

The perturbation parameter $\epsilon$ plays a critical role in improving robustness under minority group shifts. Figures~\ref{fig:waterbirds_epsilon_impact} and~\ref{fig:celeba_epsilon_impact} show how increasing $\epsilon$ affects worst-group accuracy for the Waterbirds and CelebA datasets, respectively. Notably, both datasets achieve significant gains in worst-group accuracy when $\epsilon$ is set above zero, indicating enhanced resilience to distributional shifts.

\begin{figure}[ht]
	\centering
	\subfigure[Waterbirds dataset under a minority group shift.]{
		\includegraphics[width=0.48\textwidth]{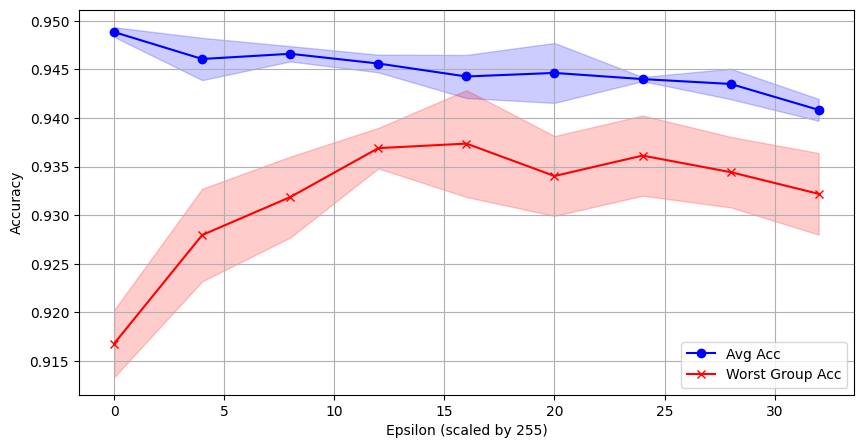}
		\label{fig:waterbirds_epsilon_impact}
	}
	\hfill
	\subfigure[CelebA dataset under a minority group shift.]{
		\includegraphics[width=0.48\textwidth]{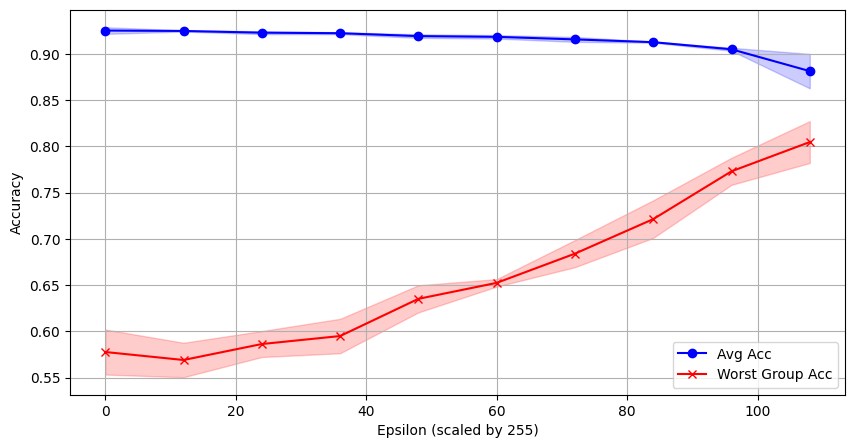}
		\label{fig:celeba_epsilon_impact}
	}
	\caption{Impact of $\epsilon$ on robustness. The x-axis represents $\epsilon$ values scaled by 255, and the y-axis indicates accuracy. Each point is the mean of 3 runs (solid lines), and the shaded regions show the standard deviation. For this analysis, the learning rate and $\ell_2$ penalty were fixed to isolate the effect of $\epsilon$.}
	\label{fig:epsilon_impact_graphs}
\end{figure}

As illustrated in Figure~\ref{fig:waterbirds_epsilon_impact}, larger $\epsilon$ values consistently improve worst-group accuracy on Waterbirds, enabling the model to better manage intra-group variations and subpopulation shifts. A similar trend appears in Figure~\ref{fig:celeba_epsilon_impact} for CelebA, further validating the robustness gained by appropriately increasing $\epsilon$.

These findings underscore the importance of incorporating conditional distribution uncertainty into the training framework. By effectively capturing within-group variability, our approach significantly enhances worst-group performance, making it well-suited for handling realistic distributional shifts.

\section{Grad-CAM Results and Analysis}
\label{sec:gradcam_analysis}

To gain further insight into where each model focuses its attention under minority-group shifts, we visualize Grad-CAM \citep{selvaraju2017grad} heatmaps on misclassified examples (by Group DRO) that our method classifies correctly. Figure~\ref{fig:gradcam_waterbirds} shows examples on the Waterbirds dataset, while Figure~\ref{fig:gradcam_celeba} presents examples from CelebA. 
\begin{figure}[ht]
	\centering
	\includegraphics[width=0.8\textwidth]{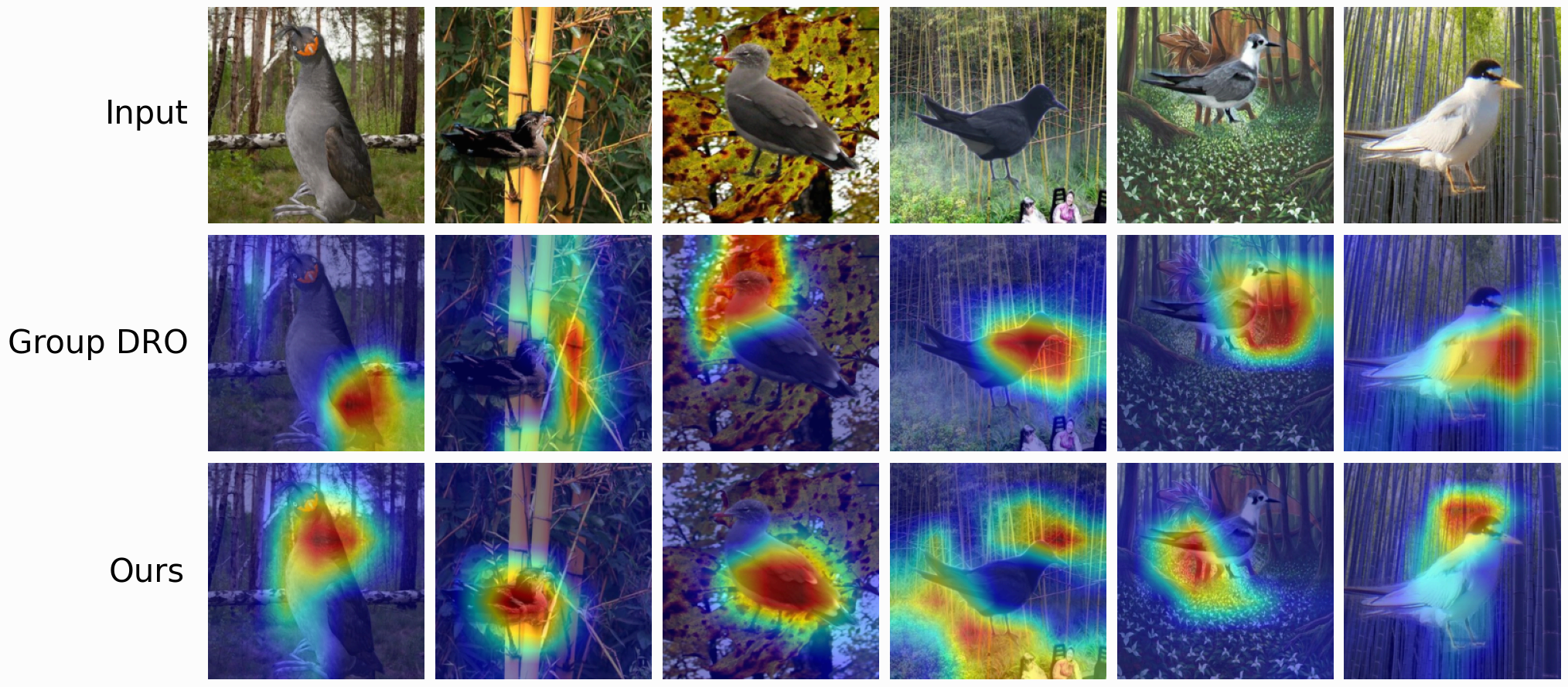}
	\caption{Grad-CAM visualizations for Waterbirds test images from a minority-group shift scenario. Each column shows an input image (top row), Grad-CAM for Group DRO (middle row), and Grad-CAM for our method (bottom row).}
	\label{fig:gradcam_waterbirds}
\end{figure}

\paragraph{Waterbirds.}
In Figure~\ref{fig:gradcam_waterbirds}, the minority-group shift involves species changes not observed in the training set. While Group DRO often localizes on a narrow region of the bird—sometimes near the torso or background—our method exhibits a more distributed attention, covering details like the wings, beak, or feet. This broader localization helps the model rely on features invariant to previously unseen waterbird species, enabling robust classification despite changes in the specific types of waterbirds encountered.

\begin{figure}[ht]
	\centering
	\includegraphics[width=0.8\textwidth]{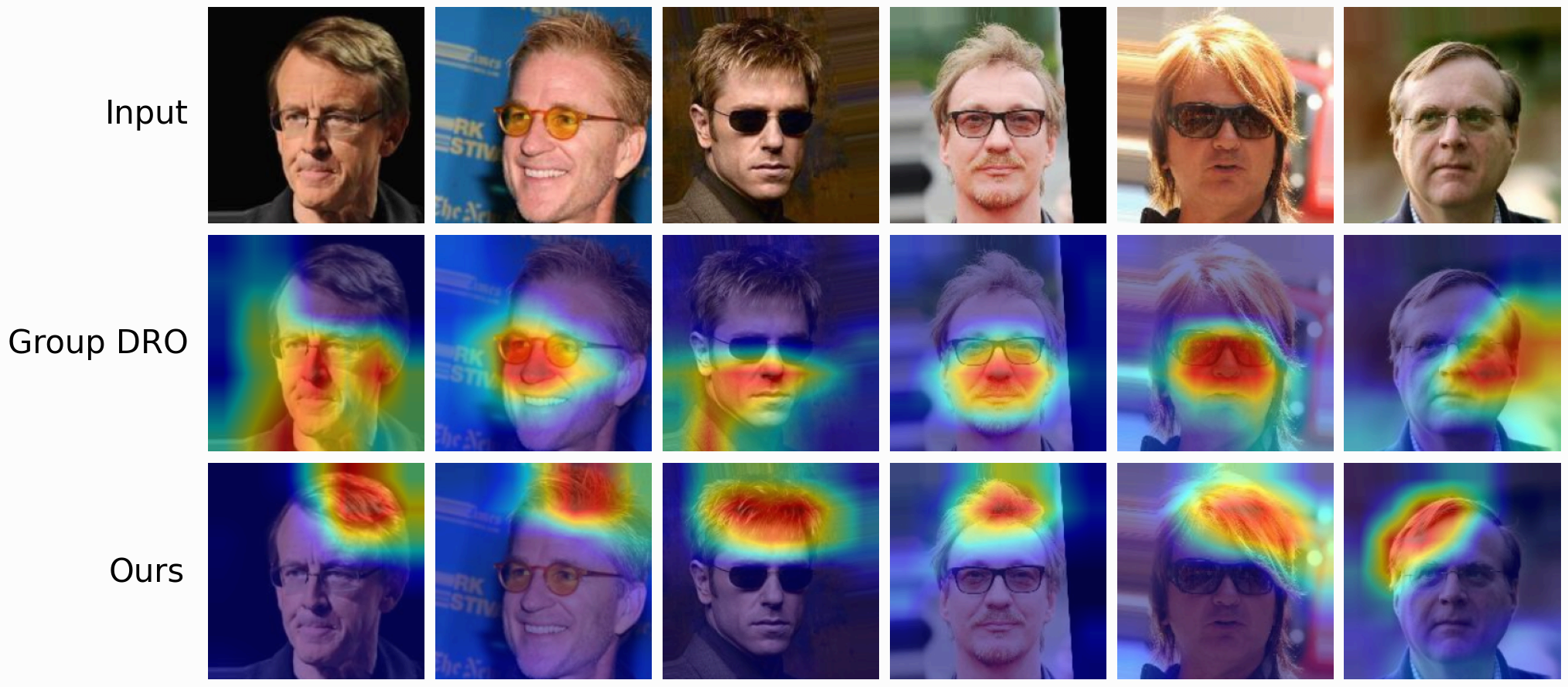}
	\caption{Grad-CAM visualizations for CelebA test images from the minority-group shift scenario. Each column shows the input image (top row), Grad-CAM for Group DRO (middle row), and Grad-CAM for our method (bottom row).}
	\label{fig:gradcam_celeba}
\end{figure}

\paragraph{CelebA.}
Figure~\ref{fig:gradcam_celeba} shows examples from the minority group (blond-hair, male) in which the test images include glasses—an attribute absent from the training set. In these cases, Group DRO erroneously directs attention toward the facial or eyewear regions rather than focusing on hair color. By contrast, our method more reliably highlights the hair region, aligning with the intended classification objective and enabling correct predictions even under previously unseen attributes.

Overall, these visualizations confirm that, under challenging distribution shifts, our hierarchical DRO framework is less prone to confounding features and more successful in focusing on the task-relevant regions. This broader and more contextually aligned attention helps maintain strong performance even when encountering unseen or spurious attributes.

{
\section{Comparison with Standard Wasserstein DRO}
\label{sec:appendix_wasserstein_comparison}

To contextualize the performance gains achieved by our hierarchical formulation, we compare it against standard Wasserstein DRO, using the same latent-space cost function and identical training hyperparameters. The comparison is carried out on the Waterbirds and CelebA benchmarks under the minority group shift setting considered in our experiments, which reflects the primary distributional scenario targeted by our method. Figures~\ref{fig:wasserstein_waterbirds} and~\ref{fig:wasserstein_celeba} summarize the behavior of standard Wasserstein DRO across a wide range of Wasserstein radii.

Two structural limitations underlie the weak performance observed in these plots.  
First, standard Wasserstein DRO models uncertainty only through perturbations within a single Wasserstein ball around the empirical distribution, without incorporating any uncertainty over group proportions. In spurious-correlation settings, however, variation in group proportions constitutes a fundamental mode of distribution shift: a minority group that is scarcely observed during training may appear far more frequently at test time, and ignoring this possibility prevents standard Wasserstein DRO from addressing such mixture-level changes.  
Second, in spurious-correlation benchmarks, different groups have substantially different sample sizes. Our method is designed to reflect this by assigning a distinct within-group radius to each group ($\epsilon_g = \epsilon / \sqrt{n_g}$), so that smaller groups—whose empirical distributions are statistically less reliable—are assigned a larger perturbation radius. In contrast, standard Wasserstein DRO uses a single global radius and therefore cannot adapt to group-specific statistical uncertainty, which is crucial in the minority-shift setting.

For both Waterbirds and CelebA, we varied the Wasserstein radius over a broad range of values and report the corresponding worst-group accuracy curves below. Across all radii, standard Wasserstein DRO achieves significantly lower worst-group accuracy than our method, which attains $93.7\%$ and $72.1\%$ on Waterbirds and CelebA, respectively. These results highlight that perturbing the unconditional distribution alone is insufficient for capturing the types of shifts present in spurious-correlation settings.

\begin{figure}[ht]
    \centering
    \subfigure[Waterbirds dataset.]{
        \includegraphics[width=0.48\textwidth]{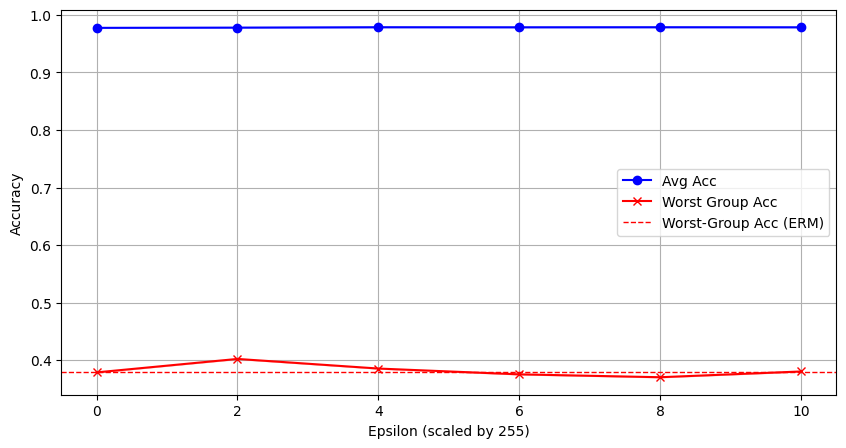}
        \label{fig:wasserstein_waterbirds}
    }
    \hfill
    \subfigure[CelebA dataset.]{
        \includegraphics[width=0.48\textwidth]{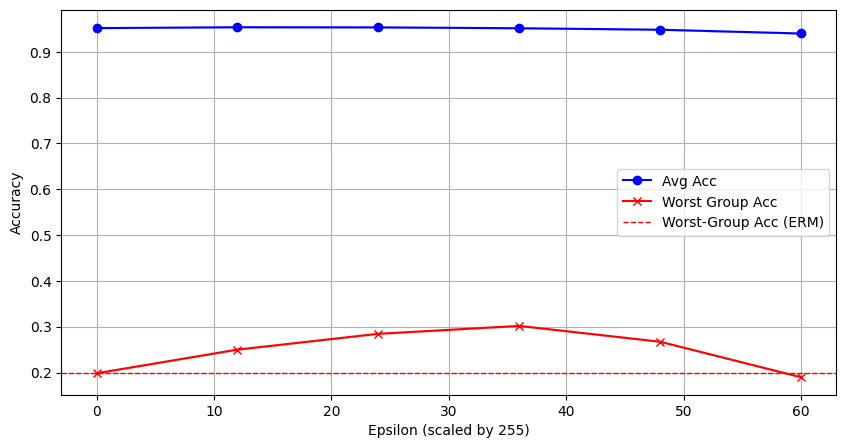}
        \label{fig:wasserstein_celeba}
    }
    \caption{Performance of standard Wasserstein DRO across a wide range of radii.}
    \label{fig:wasserstein_comparison}
\end{figure}

These findings reinforce the main conclusion of our study: achieving robustness under the complex and realistic distribution shifts that arise in spurious-correlation benchmarks—particularly those involving minority-group shifts—requires modeling uncertainty over group proportions as well as group-specific conditional distributions.
}

\end{document}